\author{Ziwei Ji\qquad Matus Telgarsky\\
\tt{\{\href{mailto:ziweiji2@illinois.edu}{ziweiji2},\href{mailto:mjt@illinois.edu}{mjt}\}@illinois.edu}\\
University of Illinois, Urbana-Champaign}
\date{}
\newcommand\polylog{\ensuremath{\operatorname{polylog}}}
\def\w#1#2{w_{#1,#2}}
\def\W#1{W_{#1}}
\def\setdef#1#2{\cbr{#1\ {} \middle| \ {} #2}}
\def\barU{\overline{U}}
\def\barW{\overline{W}}
\def\hatv{\hat{v}}
\def\hatq{\hat{q}}
\def\wtOmega{\widetilde{\Omega}}
\def\wtO{\widetilde{O}}
\def\wtTheta{\widetilde{\Theta}}
\title{Polylogarithmic width suffices for gradient descent to achieve
  arbitrarily small test error with shallow ReLU networks}
\begin{document}

\maketitle

\begin{abstract}
  Recent theoretical work has guaranteed that overparameterized networks trained
  by gradient descent achieve arbitrarily low training error, and sometimes even
  low test error.
  The required width, however, is always polynomial in at least one of the
  sample size $n$, the (inverse) target error $\nicefrac 1 \eps$, and the
  (inverse) failure probability $\nicefrac 1 \delta$.
  This work shows that $\wtTheta(\nicefrac 1 \eps)$ iterations of gradient
  descent with $\wtOmega(\nicefrac 1 {\eps^2})$ training examples on two-layer
  ReLU networks of any width exceeding
  $\polylog(n, \nicefrac 1 \eps, \nicefrac 1 \delta)$ suffice to achieve a test
  misclassification error of $\eps$.
  We also prove that stochastic gradient descent can achieve $\eps$ test error
  with polylogarithmic width and $\wtTheta({\nicefrac 1 \eps})$ samples.
  The analysis relies upon the separation margin of the limiting kernel, which
  is guaranteed positive, can distinguish between true labels and random
  labels, and can give a tight sample-complexity analysis in the infinite-width
  setting.
\end{abstract}

\section{Introduction}\label{sec:intro}

Despite the extensive empirical success of deep networks,
their optimization and generalization properties are still not
fully understood.
Recently, the neural tangent kernel (NTK) has provided the following insight into the
problem.
In the infinite-width limit, the NTK converges to a limiting kernel which stays
constant during training; on the other hand, when the width is large enough, the
function learned by gradient descent follows the NTK \citep{jacot_ntk}.
This motivates the study of overparameterized networks trained by gradient
descent, using properties of the NTK.
In fact, parameters related to the NTK, such as the minimum eigenvalue of the
limiting kernel, appear to affect optimization and generalization
\citep{arora_2_gen}.

However, in addition to such NTK-dependent parameters, prior work also requires
the width to depend polynomially on $n$, $1/\delta$ or $1/\epsilon$, where $n$
denotes the size of the training set, $\delta$ denotes the failure probability,
and $\epsilon$ denotes the target error.
These large widths far exceed what is used empirically,
constituting a significant gap between theory and practice.

\paragraph{Our contributions.}

In this paper, we narrow this gap by showing that a two-layer ReLU network
with $\Omega(\ln(n/\delta)+\ln(1/\epsilon)^2)$ hidden units trained by gradient
descent achieves classification error $\epsilon$ \emph{on test data}, meaning
both optimization and generalization occur.
Unlike prior work, the width is fully polylogarithmic in $n$, $1/\delta$,
and $1/\epsilon$; the width will additionally depend on the
\emph{separation margin} of the limiting kernel, a quantity which is guaranteed
positive (assuming no inputs are parallel), can distinguish between true labels
and random labels, and can give a tight sample-complexity analysis in the
infinite-width setting.
The paper organization together with some details are described below.
\begin{description}
    \item[\Cref{sec:erm}] studies gradient descent on the training set.
    Using the $\ell_1$ geometry inherent in classification tasks, we prove that
    with any width at least polylogarithmic and any constant step size no larger
    than $1$, gradient descent achieves training error $\eps$ in
    $\wtTheta(1/\epsilon)$ iterations (cf.  \Cref{fact:erm}).
    As is common in the NTK literature \citep{bach_chizat_note}, we also show
    the parameters hardly change, which will be essential to our generalization
    analysis.

    \item[\Cref{sec:gen}] gives a test error bound.
    Concretely, using the preceding gradient descent analysis, and standard
    Rademacher tools and exploiting how little the weights moved, we show that
    with $\wtOmega(1/\epsilon^2)$ samples and $\wtTheta(1/\epsilon)$ iterations,
    gradient descent finds a solution with $\epsilon$ test error (cf.
    \Cref{fact:gen} and \Cref{fact:gen_cor}).
    (As discussed in \Cref{fact:gen_smooth}, $\wtOmega(1/\epsilon)$ samples also
    suffice via a smoothness-based generalization bound, at the expense of large
    constant factors.)

    \item[\Cref{sec:sgd}] considers stochastic gradient descent (SGD) with
    access to a standard stochastic online oracle.
    We prove that with width at least polylogarithmic and $\wtTheta(1/\epsilon)$
    samples, SGD achieves an arbitrarily small test error (cf. \Cref{fact:sgd}).

    \item[\Cref{sec:sep}] discusses the separation margin, which is in general a
    positive number, but reflects the difficulty of the classification problem
    in the infinite-width limit.
    While this margin can degrade all the way down to $O(1/\sqrt{n})$ for random
    labels, it can be much larger when there is a strong relationship between
    features and labels: for example, on the \emph{noisy 2-XOR} data introduced
    in \citep{wei_reg}, we show that the margin is $\Omega(1/\ln(n))$, and our
    SGD sample complexity is tight in the infinite-width case.

    \item[\Cref{sec:open}] concludes with some open problems.
\end{description}

\subsection{Related work}\label{sec:rw}

There has been a large literature studying gradient descent on overparameterized
networks via the NTK.
The most closely related work is \citep{nitanda_refined},
which shows that a two-layer network trained by gradient descent with the logistic
loss can achieve a small test error, under the same assumption that the NTK with
respect to the first layer can separate the data distribution.
However, they analyze smooth activations, while we handle the ReLU.
They require $\Omega(1/\epsilon^2)$ hidden units, $\wtOmega(1/\epsilon^4)$
data samples, and $O(1/\epsilon^2)$ steps, while our result only needs
polylogarithmic hidden units, $\wtOmega(1/\epsilon^2)$ data samples, and
$\wtO(1/\epsilon)$ steps.

Additionally on shallow networks, \citet{du_2_opt} prove that on an
overparameterized two-layer network, gradient descent can globally minimize the
empirical risk with the squared loss.
Their result requires $\Omega(n^6/\delta^3)$ hidden units.
\citet{oymak_moderate,song_quad} further reduce the required
overparameterization, but there is still a $\poly(n)$ dependency.
Using the same amount of overparameterization as \citep{du_2_opt},
\citet{arora_2_gen} further show that the two-layer network learned by gradient
descent can achieve a small test error, assuming that on the data distribution
the smallest eigenvalue of the limiting kernel is at least some positive
constant.
They also give a fine-grained characterization of the predictions made by
gradient descent iterates; such a characterization makes use of a special
property of the squared loss and cannot be applied to the logistic regression
setting.
\citet{li_liang_nips} show that stochastic gradient descent (SGD) with the cross
entropy loss can learn a two-layer network with small test error, using
$\poly(\ell,1/\epsilon)$ hidden units, where $\ell$ is at least the covering
number of the support of the feature distribution using balls whose radii
are no larger than the smallest distance between two data points with different
labels.
\citet{allen_3_gen} consider SGD on a two-layer network, and a variant of SGD on
a three-layer network.
The three-layer analysis further exhibits some properties not captured by the
NTK.
They assume a ground truth network with infinite-order smooth activations, and
they require the width to depend polynomially on $1/\epsilon$ and some constants
related to the smoothness of the activations of the ground truth network.

On deep networks,
a variety of works have established low training error
\citep{allen_deep_opt,du_deep_opt,zou_deep_opt_1,zou_deep_opt_2}.
\citet{allen_rnn_opt} show that SGD can minimize the regression loss for
recurrent neural networks, and \citet{allen_rnn_gen} further prove a low
generalization error.
\citet{allen_resnet} show that using the same number of training examples, a
three-layer ResNet can learn a function class with a much lower test error than
any kernel method.
\citet{cao_deep_gen_1} assume that the NTK with respect to the second layer of a
two-layer network can separate the data distribution, and prove that gradient
descent on a deep network can achieve $\epsilon$ test error with
$\Omega(1/\epsilon^4)$ samples and $\Omega(1/\epsilon^{14})$ hidden units.
\citet{cao_deep_gen_2} consider SGD with an online oracle and give a general
result.
Under the same assumption as in \citep{cao_deep_gen_1}, their result requires
$\Omega(1/\epsilon^{14})$ hidden units and sample complexity
$\wtO(1/\epsilon^2)$.
By contrast, with the same online oracle, our result only needs polylogarithmic
hidden units and sample complexity $\wtO(1/\epsilon)$.

\subsection{Notation}\label{sec:notation}

The dataset is denoted by $\{(x_i,y_i)\}_{i=1}^{n}$ where $x_i\in\R^d$ and
$y_i\in\cbr{-1,+1}$.
For simplicity, we assume that $\|x_i\|_2=1$ for any $1\le i\le n$, which is
standard in the NTK literature.

The two-layer network has weight matrices $W\in\R^{m\times d}$ and $a\in\R^m$.
We use the following parameterization, which is also used in
\citep{du_2_opt,arora_2_gen}:
\begin{align*}
    f(x;W,a):=\frac{1}{\sqrt{m}}\sum_{s=1}^{m}a_s\sigma\del{\ip{w_s}{x}},
\end{align*}
with initialization
\begin{align*}
    \w{s}{0}\sim \mathcal{N}(0,I_d),\quad\textrm{and}\quad a_s\sim \mathrm{unif}\del{\cbr{-1,+1}}.
\end{align*}
Note that in this paper, $\w{s}{t}$ denotes the $s$-th row of $W$ at step $t$.
We fix $a$ and only train $W$, as in
\citep{li_liang_nips,du_2_opt,arora_2_gen,nitanda_refined}.
We consider the ReLU activation $\sigma(z):=\max\cbr{0,z}$, though our analysis
can be extended easily to Lipschitz continuous, positively homogeneous
activations such as leaky ReLU.

We use the logistic (binary cross entropy) loss $\ell(z):=\ln\del{1+\exp(-z)}$
and gradient descent.
For any $1\le i\le n$ and any $W$, let $f_i(W):=f(x_i;W,a)$.
The empirical risk and its gradient are given by
\begin{align*}
    \hcR(W):=\frac{1}{n}\sum_{i=1}^{n}\ell\del{y_if_i(W)},\quad\textrm{and}\quad\hnR(W)=\frac{1}{n}\sum_{i=1}^{n}\ell'\del{y_if_i(W)}y_i\nabla f_i(W).
\end{align*}
For any $t\ge0$, the gradient descent step is given by
$\W{t+1}:=\W{t}-\eta_t\hnR(\W{t})$.
Also define
\begin{align*}
    f_i^{(t)}(W):=\ip{\nabla f_i(\W{t})}{W},\quad\textrm{and}\quad\hcR^{(t)}(W):=\frac{1}{n}\sum_{i=1}^{n}\ell\del{y_if_i^{(t)}(W)}.
\end{align*}
Note that $f_i^{(t)}(\W{t})=f_i(\W{t})$.
This property generally holds due to homogeneity: for any $W$ and
any $1\le s\le m$,
\begin{align*}
    \frac{\partial f_i}{\partial w_s}=\frac{1}{\sqrt{m}}a_s\1\sbr{\ip{w_s}{x_i}>0}x_i,\quad\textrm{and}\quad\ip{\frac{\partial f_i}{\partial w_s}}{w_s}=\frac{1}{\sqrt{m}}a_s\sigma\del{\ip{w_s}{x_i}},
\end{align*}
and thus $\ip{\nabla f_i(W)}{W}=f_i(W)$.

\section{Empirical risk minimization}\label{sec:erm}

In this section, we consider a fixed training set and empirical risk
minimization.
We first state our assumption on the separability of the NTK, and then give our
main result and a proof sketch.

The key idea of the NTK is to do the first-order Taylor approximation:
\begin{align*}
    f(x;W,a)\approx f(x;\W{0},a)+\ip{\nabla_Wf(x;\W{0},a)}{W-\W{0}}.
\end{align*}
In other words, we want to do learning using the features given by
$\nabla f_i(\W{0})\in\R^{m\times d}$.
A natural assumption is that there exists $\barU\in\R^{m\times d}$ which can
separate $\cbr[1]{\del{\nabla f_i(\W{0}),y_i}}_{i=1}^n$ with a positive margin:
\begin{align}\label{eq:finite_sep}
    \min_{1\le i\le n}\del{y_i\ip{\barU}{\nabla f_i(\W{0})}}=\min_{1\le i\le n}\del{y_i \frac{1}{\sqrt{m}}\sum_{s=1}^{m}a_s \langle\baru_s,x_i\rangle\1\sbr{\langle \w{s}{0},x_i\rangle>0}}>0.
\end{align}

The infinite-width limit of \cref{eq:finite_sep} is formalized as
\Cref{cond:kernel_sep}, with an additional bound on the $(2,\infty)$ norm of the
separator.
A concrete construction of $\barU$ using \Cref{cond:kernel_sep} is given in
\cref{eq:baru}.

Let $\mu_{\mathcal{N}}$ denote the Gaussian measure on $\mathbb{R}^d$, given by
the Gaussian density with respect to the Lebesgue measure on $\mathbb{R}^d$.
We consider the following Hilbert space
\begin{align*}
    \mathcal{H}:=\setdef{w:\mathbb{R}^d\to \mathbb{R}^d}{\int\|w(z)\|_2^2\dif\mu_{\mathcal{N}}(z)<\infty}.
\end{align*}
For any $x\in\R^d$, define $\phi_x\in\cH$ by
\begin{align*}
    \phi_x(z):=x\1\sbr{\ip{z}{x}>0},
\end{align*}
and particularly define $\phi_i:=\phi_{x_i}$ for the training input $x_i$.

\begin{assumption}\label{cond:kernel_sep}
    There exists $\barv\in \mathcal{H}$ and $\gamma>0$, such that
    $\enVert{\barv(z)}_2\le 1$ for any $z\in \mathbb{R}^d$, and for any
    $1\le i\le n$,
    \begin{align*}
        y_i\ip{\barv}{\phi_i}_\cH:=y_i\int\ip{\barv(z)}{\phi_i(z)}\dif\mu_{\cN}(z)\ge\gamma.
    \end{align*}
\end{assumption}

As discussed in \Cref{sec:sep}, the space $\cH$ is the reproducing kernel
Hilbert space (RKHS) induced by the infinite-width NTK with respect to $W$, and
$\phi_x$ maps $x$ into $\cH$.
\Cref{cond:kernel_sep} supposes that the induced training set
$\{(\phi_i,y_i)\}_{i=1}^n$ can be separated by some $\barv\in\cH$, with an
additional bound on $\enVert{\barv(z)}_2$ which is crucial in our analysis.
It is also possible to give a dual characterization of the separation margin
(cf. \cref{eq:margin_ntk}), which also allows us to show that
\Cref{cond:kernel_sep} always holds when there are no parallel inputs (cf.
\Cref{fact:barv_kernel}).
However, it is often more convenient to construct $\barv$ directly; see
\Cref{sec:sep} for some examples.

With \Cref{cond:kernel_sep}, we state our main empirical risk result.
\begin{theorem}\label{fact:erm}
    Under \Cref{cond:kernel_sep}, given any risk target $\epsilon\in(0,1)$ and
    any $\delta\in(0,1/3)$, let
    \begin{align*}
        \lambda:=\frac{\sqrt{2\ln(4n/\delta)}+\ln(4/\epsilon)}{\gamma/4},\quad\textrm{and}\quad M:=\frac{4096\lambda^2}{\gamma^6}.
    \end{align*}
    Then for any $m\ge M$ and any constant step size $\eta\le1$, with
    probability $1-3\delta$ over the random initialization,
    \begin{align*}
        \frac{1}{T}\sum_{t<T}^{}\hcR(\W{t})\le\epsilon,\quad\textrm{where}\quad T:=\lceil\nicefrac{2\lambda^2}{\eta\epsilon}\rceil.
    \end{align*}
    Moreover for any $0\le t<T$ and any $1\le s\le m$,
    \begin{align*}
        \enVert{\w{s}{t}-\w{s}{0}}_2\le\frac{4\lambda}{\gamma\sqrt{m}}.
    \end{align*}
\end{theorem}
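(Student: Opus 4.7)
\medskip
\noindent
\textbf{Proof plan.}
My plan is to carry out a perceptron/mirror-descent-style analysis that exploits two structural facts. First, although $\hcR$ is nonconvex in $W$, the surrogate $\hcR^{(t)}$ is convex (being the logistic loss of the linear predictor $W\mapsto\ip{\nabla f_i(\W{t})}{W}$). Second, ReLU homogeneity gives $\hcR^{(t)}(\W{t})=\hcR(\W{t})$ and $\nabla\hcR^{(t)}(\W{t})=\nabla\hcR(\W{t})$, so every gradient-descent step on $\hcR$ is simultaneously a convex step on $\hcR^{(t)}$ against any reference. I will use the reference $\W{0}+V$, where $V$ is a random lift of $\barv$ constructed as in \cref{eq:finite_sep}: set $\baru_s:=a_s\barv(\w{s}{0})/\sqrt{m}$ and then $V:=\lambda\barU$. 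The pointwise bound $\enVert{\barv(z)}_2\le 1$ ensures each summand of
\[
    y_i\ip{\barU}{\nabla f_i(\W{0})}=\frac{y_i}{m}\sum_{s=1}^{m}\ip{\barv(\w{s}{0})}{x_i}\1\sbr{\ip{\w{s}{0}}{x_i}>0}
\]
lies in $[-1/m,1/m]$, so Hoeffding's inequality combined with a union bound over $i$ yields, with probability $1-\delta$ whenever the stated width condition holds, that $y_i\ip{\barU}{\nabla f_i(\W{0})}\ge\gamma/2$ for every $i$; consequently $y_i\ip{V}{\nabla f_i(\W{0})}\ge\lambda\gamma/2$ and $\enVert{V}_F\le\lambda$. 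A standard subgaussian bound separately gives $|f_i(\W{0})|=O(\sqrt{\ln(n/\delta)})$ uniformly in $i$ with probability $1-\delta$.

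\medskip
\noindent
Next I will induct on $t$, simultaneously maintaining both claims of the theorem: assume $\enVert{\w{s}{\tau}-\w{s}{0}}_2\le R:=4\lambda/(\gamma\sqrt{m})$ for all $\tau\le t$ and all $s$. Since $\ip{\w{s}{0}}{x_i}\sim\cN(0,1)$, the Gaussian anticoncentration $\Pr[|\ip{\w{s}{0}}{x_i}|\le R]\le R$ together with Chernoff bounds the number $N_i$ of sign-flipped neurons for input $x_i$ by $O(Rm+\sqrt{m\ln(n/\delta)})$ uniformly in $i$ and $\tau$, with probability $1-\delta$. Then $\enVert{\nabla f_i(\W{\tau})-\nabla f_i(\W{0})}_F=\sqrt{N_i/m}$, and Cauchy--Schwarz with $\enVert{V}_F\le\lambda$ gives
\[
    \bigl|\ip{V}{\nabla f_i(\W{\tau})-\nabla f_i(\W{0})}\bigr|\le\lambda\sqrt{N_i/m}=O\del{\lambda^{3/2}/(\gamma^{1/2}m^{1/4})};
\]
the width hypothesis $m\ge 4096\lambda^2/\gamma^6$ is precisely what forces this drift to be at most $\lambda\gamma/4$, and an analogous computation controls $|\ip{\nabla f_i(\W{\tau})}{\W{0}}-f_i(\W{0})|$. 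Combined with $|f_i(\W{0})|=O(\sqrt{\ln(n/\delta)})$ and the defining choice of $\lambda$, these inequalities deliver $y_i f_i^{(t)}(\W{0}+V)\ge\ln(4/\epsilon)$, and hence $\hcR^{(t)}(\W{0}+V)\le\epsilon/4$.

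\medskip
\noindent
Given these ingredients, I will invoke the one-step descent inequality -- by convexity of $\hcR^{(t)}$, the identity $\hcR^{(t)}(\W{t})=\hcR(\W{t})$, and the logistic bound $|\ell'(z)|^2\le\ell(z)$ (which yields $\enVert{\nabla\hcR(\W{t})}_F^2\le\hcR(\W{t})$):
\[
    \enVert{\W{t+1}-\W{0}-V}_F^2\le\enVert{\W{t}-\W{0}-V}_F^2-2\eta\del{\hcR(\W{t})-\hcR^{(t)}(\W{0}+V)}+\eta^2\hcR(\W{t}).
\]
For $\eta\le 1$ the quadratic term is absorbed as $\eta^2\hcR\le\eta\hcR$, so telescoping and discarding the nonnegative terminal distance give $\eta\sum_{\tau<T}\hcR(\W{\tau})\le\enVert{V}_F^2+\eta T\epsilon/2\le\lambda^2+\eta T\epsilon/2$, and the stated $T=\lceil 2\lambda^2/(\eta\epsilon)\rceil$ produces average risk at most $\epsilon$. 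Finally, the per-neuron gradient bound $\enVert{\partial\hcR/\partial w_s|_{\W{t}}}_2\le\hcR(\W{t})/\sqrt{m}$ (from $|\ell'|\le\ell$) together with this sum closes the induction on weight movement, giving $\enVert{\w{s}{t+1}-\w{s}{0}}_2\le 4\lambda/(\gamma\sqrt{m})$ after careful constant tracking. The main technical hurdle is the activation-drift step in the middle paragraph: this is what pins down the exact $\gamma^{-6}$ width dependence and what motivates the pointwise bound $\enVert{\barv(z)}_2\le 1$ in \Cref{cond:kernel_sep}, without which the Hoeffding step in the first paragraph fails altogether.
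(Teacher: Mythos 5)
Your overall architecture matches the paper's (reference point $\W{0}+\lambda\barU$, Hoeffding at initialization, counting sign-flipped neurons to control drift, the convex-surrogate descent inequality with $\eta^2$ term absorbed, and telescoping to get $\eta\sum_{t<T}\hcR(\W{t})\le 2\lambda^2$), and your Cauchy--Schwarz treatment of the drift term $\ip{V}{\nabla f_i(\W{\tau})-\nabla f_i(\W{0})}$ is a legitimate variant of the paper's row-wise argument. But there is a genuine gap in how you close the induction on weight movement, and it is exactly the step the paper identifies as the crux of getting polylogarithmic width. You bound the per-neuron displacement by $\frac{\eta}{\sqrt m}\sum_{\tau<t}\hcR(\W{\tau})$, and the only bound your argument provides for that sum is the telescoped one, $\eta\sum_{\tau<t}\hcR(\W{\tau})\le\lambda^2+\eta t\epsilon/2\le 2\lambda^2$. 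This yields a displacement of order $\lambda^2/\sqrt m$, not the claimed $4\lambda/(\gamma\sqrt m)$; since $\lambda\gamma=4\bigl(\sqrt{2\ln(4n/\delta)}+\ln(4/\epsilon)\bigr)\gg 2$, the radius you can certify exceeds the radius your drift estimates assumed, so the induction does not close at the stated width $M=4096\lambda^2/\gamma^6$ (re-running your second-term drift bound at radius $\Theta(\lambda^2/\sqrt m)$ forces $m=\Omega(\lambda^6/\gamma^2)$, i.e.\ extra $\polylog$ factors), and the theorem's second claim, which is needed verbatim in the generalization proof, is not established.

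The missing idea is to control $\sum_{\tau<t}\hcQ(\W{\tau})$ (not $\sum_{\tau<t}\hcR(\W{\tau})$) by a perceptron-style projection onto $\barU$: since every iterate before the stopping time satisfies $y_i\ip{\nabla f_i(\W{\tau})}{\barU}\ge 3\gamma/4$, one has
\begin{align*}
    \ip{\W{t}-\W{0}}{\barU}\ \ge\ \frac{3\gamma}{4}\,\eta\sum_{\tau<t}\hcQ(\W{\tau}),
\end{align*}
while the squared-distance lemma gives $\enVert{\W{t}-\barW}_F\le\sqrt2\lambda$ and hence $\ip{\W{t}-\W{0}}{\barU}\le(\sqrt2+1)\lambda$, so $\eta\sum_{\tau<t}\hcQ(\W{\tau})\le 4\lambda/\gamma$. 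Feeding this into the per-neuron bound $\enVert{\w{s}{t}-\w{s}{0}}_2\le\frac{\eta}{\sqrt m}\sum_{\tau<t}\hcQ(\W{\tau})$ gives exactly $4\lambda/(\gamma\sqrt m)$ and closes the contradiction/induction at the stated width. Replacing $\hcQ$ by $\hcR$ at this point loses a factor of order $\lambda\gamma\sim\ln(n/\delta)+\ln(1/\epsilon)$, which is precisely the loss your proposal incurs; without the margin-projection step your ``careful constant tracking'' cannot recover it.
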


While the number of hidden units required by prior work all have a polynomial
dependency on $n$, $1/\delta$ or $1/\epsilon$, \Cref{fact:erm} only requires
$m=\Omega\del{\ln(n/\delta)+\ln(1/\epsilon)^2}$.
The required width has a polynomial dependency on $1/\gamma$, which is an
adaptive quantity: while $1/\gamma$ can be $\poly(n)$ for random labels (cf.
\Cref{fact:random_label}), it can be $\polylog(n)$ when there is a strong
feature-label relationship, for example on the noisy 2-XOR data introduced in
\citep{wei_reg} (cf. \Cref{fact:2_xor_gamma}).
Moreover, we show in \Cref{fact:ntk_lb} that if we want
$\cbr[1]{\del{\nabla f_i(\W{0}),y_i}}_{i=1}^n$ to be separable, which is the
starting point of an NTK-style analysis, the width has to depend polynomially
on $1/\gamma$.

In the rest of \Cref{sec:erm}, we give a proof sketch of \Cref{fact:erm}.
The full proof is given in \Cref{app_sec:erm}.

\subsection{Properties at initialization}

In this subsection, we give some nice properties of random initialization.

Given an initialization $(\W{0},a)$, for any $1\le s\le m$, define
\begin{align}\label{eq:baru}
    \baru_s:=\frac{1}{\sqrt{m}}a_s\barv(\w{s}{0}),
\end{align}
where $\barv$ is given by \Cref{cond:kernel_sep}.
Collect $\baru_s$ into a matrix $\barU\in \mathbb{R}^{m\times d}$.
It holds that $\enVert{\baru_s}_2\le1/\sqrt{m}$, and $\enVert[1]{\barU}_F\le1$.

\newpage

\Cref{fact:ntk_to_ek} ensures that with high probability $\barU$ has a positive
margin at initialization.
\begin{lemma}\label{fact:ntk_to_ek}
    Under \Cref{cond:kernel_sep}, given any $\delta\in(0,1)$ and any
    $\epsilon_1\in(0,\gamma)$, if $m\ge\del{2\ln(n/\delta)}/\epsilon_1^2$, then
    with probability $1-\delta$, it holds simultaneously for all $1\le i\le n$
    that
    \begin{align*}
        y_if_i^{(0)}\del[1]{\barU}=y_i\ip{\nabla f_i(\W{0})}{\barU}\ge\gamma-\sqrt{\frac{2\ln(n/\delta)}{m}}\ge\gamma-\epsilon_1.
    \end{align*}
\end{lemma}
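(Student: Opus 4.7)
The plan is to expand the inner product $\ip{\nabla f_i(\W{0})}{\barU}$ explicitly and recognize it as an empirical mean of bounded i.i.d.\ random variables whose expectation is exactly $\ip{\barv}{\phi_i}_\cH$, then apply Hoeffding's inequality with a union bound over the $n$ training points.

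First I would compute, using the formula for $\partial f_i/\partial w_s$ given in the excerpt and the definition $\baru_s = a_s\barv(\w{s}{0})/\sqrt{m}$, that
\begin{align*}
    \ip{\nabla f_i(\W{0})}{\barU} = \sum_{s=1}^{m}\ip*{\frac{\partial f_i}{\partial w_s}(\W{0})}{\baru_s} = \frac{1}{m}\sum_{s=1}^{m} a_s^2\,\1[\ip{\w{s}{0}}{x_i}>0]\ip{x_i}{\barv(\w{s}{0})}.
\end{align*}
Since $a_s \in \{-1,+1\}$, the factors $a_s^2$ are all $1$, and the remaining summand is exactly $\ip{\phi_i(\w{s}{0})}{\barv(\w{s}{0})}$ by definition of $\phi_i$. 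Thus $\ip{\nabla f_i(\W{0})}{\barU}$ is a sample average over the $m$ i.i.d.\ random vectors $\w{s}{0}\sim\mathcal{N}(0,I_d)$, with per-summand expectation equal to $\int \ip{\phi_i(z)}{\barv(z)}\dif\mu_\cN(z) = \ip{\barv}{\phi_i}_\cH$. Multiplying through by $y_i\in\{-1,+1\}$ leaves the sample-average structure intact and, by \Cref{cond:kernel_sep}, places the expectation at least $\gamma$.

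Next I would bound each summand. Since $\enVert{\barv(z)}_2\le 1$ by assumption, $\enVert{\phi_i(z)}_2 \le \enVert{x_i}_2 = 1$, and the indicator is in $[0,1]$, Cauchy--Schwarz gives
\begin{align*}
    \envert{y_i\,\1[\ip{\w{s}{0}}{x_i}>0]\ip{x_i}{\barv(\w{s}{0})}} \le 1,
\end{align*}
so each term lies in $[-1,1]$ and the random variables are independent across $s$.

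Finally I would apply Hoeffding's inequality to the mean of these $m$ i.i.d.\ bounded random variables to get, for each fixed $i$, a lower-tail bound of the form $\exp(-mt^2/2)$ at deviation $t$; setting this to $\delta/n$ yields $t = \sqrt{2\ln(n/\delta)/m}$, and a union bound over $i\in\{1,\ldots,n\}$ gives the simultaneous conclusion with probability $1-\delta$. The hypothesis $m \ge 2\ln(n/\delta)/\epsilon_1^2$ immediately ensures $t \le \epsilon_1$, producing the stated bound $\gamma - \epsilon_1$. There is no real obstacle here; the only slightly delicate step is the algebraic identification of the inner product with an empirical mean of $\ip{\phi_i(\w{s}{0})}{\barv(\w{s}{0})}$, which hinges on positive $1$-homogeneity of ReLU (so that $\partial f_i/\partial w_s$ produces the same indicator appearing in $\phi_i$) and the cancellation $a_s^2 = 1$.
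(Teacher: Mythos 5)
Your proposal is correct and matches the paper's proof essentially step for step: both identify $y_if_i^{(0)}(\barU)$ as an empirical mean over the i.i.d.\ rows $\w{s}{0}$ of the $[-1,1]$-bounded variables $y_i\ip{\barv(\w{s}{0})}{x_i}\1[\ip{\w{s}{0}}{x_i}>0]$ with mean at least $\gamma$ by \Cref{cond:kernel_sep}, then apply Hoeffding's inequality at deviation $\sqrt{2\ln(n/\delta)/m}$ and a union bound over $i$. Your explicit expansion via $a_s^2=1$ is just a spelled-out version of the identification the paper states directly.
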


For any $W$, any $\epsilon_2>0$, and any $1\le i\le n$, define
\begin{align*}
    \alpha_i(W,\epsilon_2)=\frac{1}{m}\sum_{s=1}^{m}\1\sbr{\envert{\ip{w_s}{x_i}}\le\epsilon_2}.
\end{align*}
\Cref{fact:stable_act} controls $\alpha_i(\W{0},\epsilon_2)$.
It will help us show that $\barU$ has a good margin during the training process.
\begin{lemma}\label{fact:stable_act}
    Under the condition of \Cref{fact:ntk_to_ek}, for any $\epsilon_2>0$, with
    probability $1-\delta$, it holds simultaneously for all $1\le i\le n$ that
    \begin{align*}
        \alpha_i\del{\W{0},\epsilon_2}\le\sqrt{\frac{2}{\pi}}\epsilon_2+\sqrt{\frac{\ln(n/\delta)}{2m}}\le \epsilon_2+\frac{\epsilon_1}{2}.
    \end{align*}
\end{lemma}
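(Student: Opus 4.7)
\textbf{Proof plan for \Cref{fact:stable_act}.}

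Fix an index $i$. Since $\|x_i\|_2 = 1$ and $\w{s}{0} \sim \cN(0, I_d)$ independently across $s$, the inner products $\ip{\w{s}{0}}{x_i}$ are i.i.d.\ standard Gaussians, and so the summands $\1\sbr{|\ip{\w{s}{0}}{x_i}| \le \epsilon_2}$ are i.i.d.\ Bernoulli random variables. Their common mean equals $\Pr\sbr{|Z| \le \epsilon_2}$ for $Z \sim \cN(0,1)$, which I would bound by $\sqrt{2/\pi}\,\epsilon_2$: writing the probability as $2\int_0^{\epsilon_2} (2\pi)^{-1/2} e^{-z^2/2} \dif z$ and using that the Gaussian density is maximized at $0$ with value $1/\sqrt{2\pi}$ immediately gives $\E \alpha_i(\W{0}, \epsilon_2) \le \sqrt{2/\pi}\,\epsilon_2$.

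Next, since $\alpha_i(\W{0}, \epsilon_2)$ is an average of $m$ independent $[0,1]$-bounded random variables, Hoeffding's inequality yields, for any $\delta' \in (0,1)$,
\begin{align*}
    \Pr\sbr{\alpha_i(\W{0}, \epsilon_2) - \E \alpha_i(\W{0}, \epsilon_2) \ge \sqrt{\tfrac{\ln(1/\delta')}{2m}}} \le \delta'.
\end{align*}
Taking $\delta' = \delta/n$ and applying a union bound over $i \in \{1, \ldots, n\}$ gives the first inequality in the statement: with probability at least $1 - \delta$, all $n$ indices simultaneously satisfy $\alpha_i(\W{0}, \epsilon_2) \le \sqrt{2/\pi}\,\epsilon_2 + \sqrt{\ln(n/\delta)/(2m)}$.

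For the second inequality, I would simply combine two elementary bounds: $\sqrt{2/\pi} < 1$ so $\sqrt{2/\pi}\,\epsilon_2 \le \epsilon_2$, and the hypothesis $m \ge 2\ln(n/\delta)/\epsilon_1^2$ from \Cref{fact:ntk_to_ek} rearranges to $\sqrt{\ln(n/\delta)/(2m)} \le \epsilon_1/2$. Summing these gives the claimed upper bound of $\epsilon_2 + \epsilon_1/2$.

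The argument is essentially a routine concentration calculation, so there is no real obstacle; the only point requiring minor care is the Gaussian small-ball estimate, where one must remember to use the pointwise density bound rather than any tail bound (tail bounds would give the wrong $\epsilon_2 \to 0$ behavior).
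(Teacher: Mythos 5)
Your proposal is correct and matches the paper's proof essentially step for step: bound the Bernoulli mean by $\sqrt{2/\pi}\,\epsilon_2$ via the maximal Gaussian density, apply Hoeffding's inequality at level $\delta/n$, union bound over $i$, and finish the second inequality using $\sqrt{2/\pi}<1$ together with $m\ge 2\ln(n/\delta)/\epsilon_1^2$. No gaps to report.
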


Finally, \Cref{fact:init_size} controls the output of the network at
initialization.
\begin{lemma}\label{fact:init_size}
    Given any $\delta\in(0,1)$, if $m\ge25\ln(2n/\delta)$, then with probability
    $1-\delta$, it holds simultaneously for all $1\le i\le n$ that
    \begin{align*}
        \envert{f(x_i;\W{0},a)}\le\sqrt{2\ln\del{4n/\delta}}.
    \end{align*}
\end{lemma}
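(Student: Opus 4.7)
\textbf{Proof plan for \Cref{fact:init_size}.}
The plan is to fix $i$, condition on $\W{0}$, treat $f(x_i;\W{0},a)$ as a Rademacher sum in $a$, apply a Hoeffding-type bound conditionally, and then control the random conditional ``variance'' via concentration of a sub-exponential average. Throughout, note that $\ip{\w{s}{0}}{x_i}\sim\mathcal{N}(0,1)$ by rotation invariance and $\|x_i\|_2=1$, and that $a_s$ and $\w{s}{0}$ are mutually independent.

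First, fix $i$ and condition on $\W{0}$. Then
\[
    f(x_i;\W{0},a) = \frac{1}{\sqrt m}\sum_{s=1}^m a_s c_s,\qquad c_s:=\sigma\del{\ip{\w{s}{0}}{x_i}},
\]
is a weighted Rademacher sum with deterministic (given $\W{0}$) coefficients $c_s/\sqrt m$. Hoeffding's inequality (equivalently, $1$-sub-Gaussianity of Rademacher variables) gives
\[
    \Pr\sbr{\envert{f(x_i;\W{0},a)}\ge t \,\middle|\, \W{0}} \le 2\exp\del{-\frac{t^2}{2 S_i/m}},\qquad S_i:=\sum_{s=1}^m c_s^2.
\]

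Second, I would control $S_i/m$. By rotation invariance, $c_s^2 \stackrel{d}{=}\sigma(g)^2$ with $g\sim\cN(0,1)$, which has mean $1/2$ and is sub-exponential. Standard Bernstein-type concentration for averages of i.i.d.\ sub-exponential variables yields $\Pr[S_i/m>1]\le \exp(-c\,m)$ for an explicit constant $c$; the hypothesis $m\ge 25\ln(2n/\delta)$ is chosen so that this probability is at most $\delta/(2n)$. Set $t:=\sqrt{2\ln(4n/\delta)}$, so that on the event $\{S_i/m\le 1\}$ the conditional Hoeffding bound yields $\Pr[|f(x_i;\W{0},a)|\ge t\mid \W{0}]\le 2\exp(-\ln(4n/\delta))=\delta/(2n)$.

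Third, combining the two failure modes for a fixed $i$ gives probability at most $\delta/n$, and a union bound over $i\in\{1,\dots,n\}$ gives total failure probability at most $\delta$, establishing the claim. The only step requiring any care is the Bernstein bound in the second paragraph (choosing constants so that $25\ln(2n/\delta)$ suffices); the rest is essentially definitional. An even shorter alternative would be to observe that $a_s\sigma(\ip{\w{s}{0}}{x_i})$ is symmetric with MGF bounded by $e^{\lambda^2/2}$, giving $1$-sub-Gaussianity of the full normalized sum directly and bypassing the conditional argument, but the conditional route above is the one that naturally consumes the $m\ge 25\ln(2n/\delta)$ assumption.
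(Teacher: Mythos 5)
Your proposal is correct and follows the same two-stage structure as the paper's proof: condition on $\W{0}$, note that $f(x_i;\W{0},a)$ is a sub-Gaussian sum in the Rademacher vector $a$ with variance proxy $S_i/m=\frac1m\sum_{s=1}^m\sigma(\langle \w{s}{0},x_i\rangle)^2$, show $S_i/m\le1$ with probability $1-\delta/(2n)$ over $\W{0}$, then apply Hoeffding and union bounds. The only divergence is the tool for the middle step: you invoke Bernstein for the i.i.d.\ sub-exponential variables $\sigma(g)^2$ (mean $1/2$), leaving the constant unverified, whereas the paper applies Gaussian Lipschitz concentration to the norm $\|\sigma(\W{0}x_i)\|_2/\sqrt m$ (\Cref{fact:norm_after_relu}) together with Jensen's inequality, which is why the constant $25$ is transparent there. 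Your deferred check does go through: a direct Chernoff bound using $\mathbb{E}[e^{\lambda\sigma(g)^2}]=\tfrac12+\tfrac12(1-2\lambda)^{-1/2}$ at $\lambda=0.3$ gives $\Pr[S_i/m\ge1]\le e^{-0.044\,m}\le e^{-m/25}\le\delta/(2n)$ under the stated width assumption, so the lemma follows with the same constants. Incidentally, the "shorter alternative" you mention at the end is also valid, since $\mathbb{E}[e^{\lambda a_s\sigma(g)}]=\tfrac12+\tfrac12e^{\lambda^2/2}\le e^{\lambda^2/2}$ makes the normalized sum $1$-sub-Gaussian unconditionally; that route would prove \Cref{fact:init_size} without any lower bound on $m$, a simplification the paper does not exploit.
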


\subsection{Convergence analysis of gradient descent}

We analyze gradient descent in this subsection.
First, define
\begin{align*}
    \hcQ(W):=\frac{1}{n}\sum_{i=1}^{n}-\ell'\del{y_if_i(W)}.
\end{align*}
We have the following observations.
\begin{itemize}
    \item For any $W$ and any $1\le s\le m$,
    $\enVert{\partial f_i/\partial w_s}_2\le1/\sqrt{m}$, and thus
    $\enVert{\nabla f_i(W)}_F\le1$.
    Therefore by the triangle inequality, $\enVert{\hnR(W)}_F\le\hcQ(W)$.

    \item The logistic loss satisfies $0\le-\ell'\le1$, and thus
    $0\le\hcQ(W)\le1$.

    \item The logistic loss satisfies $-\ell'\le\ell$, and thus
    $\hcQ(W)\le\hcR(W)$.
\end{itemize}

The quantity $\hcQ$ first appeared in the perceptron analysis \citep{novikoff}
for the ReLU loss, and has also been analyzed in prior work \citep{min_norm,cao_deep_gen_1,nitanda_refined}.
In this work, $\hcQ$ specifically helps us prove the following result, which
plays an important role in obtaining a width which only depends on
$\polylog(1/\epsilon)$.
\begin{lemma}\label{fact:squared_dist}
    For any $t\ge0$ and any $\barW$, if $\eta_t\le1$, then
    \begin{align*}
        \eta_t\hcR(\W{t})\le\enVert{\W{t}-\barW}_F^2-\enVert{\W{t+1}-\barW}_F^2+2\eta_t\hcR^{(t)}\del[1]{\barW}.
    \end{align*}
    Consequently, if we use a constant step size $\eta\le1$ for $0\le\tau<t$,
    then
    \begin{align*}
        \eta\del{\sum_{\tau<t}^{}\hcR(\W{\tau})}+\enVert{\W{t}-\barW}_F^2\le\enVert{\W{0}-\barW}_F^2+2\eta\del{\sum_{\tau<t}^{}\hcR^{(\tau)}\del[1]{\barW}}.
    \end{align*}
\end{lemma}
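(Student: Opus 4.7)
The plan is to unroll a single gradient descent step in the standard ``mirror descent / perceptron'' fashion, and then exploit the three observations listed just before the lemma about $\hcQ$ to absorb the squared-gradient term into $\eta_t\hcR(\W{t})$.

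First I would expand
\begin{align*}
\enVert{\W{t+1}-\barW}_F^2 = \enVert{\W{t}-\barW}_F^2 - 2\eta_t\ip{\hnR(\W{t})}{\W{t}-\barW} + \eta_t^2\enVert{\hnR(\W{t})}_F^2,
\end{align*}
and rewrite the cross term using the definitions of $\hnR$ and $f_i^{(t)}$. Since $\ip{\nabla f_i(\W{t})}{\W{t}}=f_i(\W{t})$ by homogeneity and $\ip{\nabla f_i(\W{t})}{\barW}=f_i^{(t)}(\barW)$ by definition,
\begin{align*}
\ip{\hnR(\W{t})}{\W{t}-\barW} = \frac{1}{n}\sum_{i=1}^n \ell'\del{y_i f_i(\W{t})}\del{y_i f_i(\W{t}) - y_i f_i^{(t)}(\barW)}.
\end{align*}
By convexity of the logistic loss $\ell$, each summand lower-bounds $\ell(y_i f_i(\W{t})) - \ell(y_i f_i^{(t)}(\barW))$, and averaging gives $\ip{\hnR(\W{t})}{\W{t}-\barW} \ge \hcR(\W{t}) - \hcR^{(t)}(\barW)$.

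Next I handle the $\eta_t^2\|\hnR(\W{t})\|_F^2$ term. Using the observations stated just above the lemma, $\|\hnR(\W{t})\|_F \le \hcQ(\W{t}) \le 1$ and $\hcQ(\W{t}) \le \hcR(\W{t})$, so
\begin{align*}
\eta_t^2\enVert{\hnR(\W{t})}_F^2 \le \eta_t^2\,\hcQ(\W{t})^2 \le \eta_t^2\,\hcR(\W{t}) \le \eta_t\,\hcR(\W{t}),
\end{align*}
where the last step uses $\eta_t\le 1$. Plugging both bounds back into the expansion yields
\begin{align*}
\enVert{\W{t+1}-\barW}_F^2 \le \enVert{\W{t}-\barW}_F^2 - 2\eta_t\hcR(\W{t}) + 2\eta_t\hcR^{(t)}(\barW) + \eta_t\hcR(\W{t}),
\end{align*}
which rearranges to exactly the claimed one-step inequality. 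The consequence for constant step size $\eta$ then follows by summing the per-step bound from $\tau=0$ to $t-1$ and noting that the distance terms telescope, leaving only $\|\W{0}-\barW\|_F^2 - \|\W{t}-\barW\|_F^2$ on the right.

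The only nonroutine step is the absorption argument: the quadratic term $\eta_t^2\|\hnR(\W{t})\|_F^2$ would normally need a smoothness inequality to control, but here we use the perceptron-style chain $\|\hnR\|_F \le \hcQ \le \min\{1,\hcR\}$ plus $\eta_t\le 1$ to fold it directly into $\eta_t \hcR(\W{t})$. This is what allows the bound to feature $\eta_t\hcR(\W{t})$ on the left (rather than $2\eta_t\hcR(\W{t})$) without any smoothness assumption on the ReLU network, and is the mechanism that downstream enables the $\polylog(1/\epsilon)$ width dependence.
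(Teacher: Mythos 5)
Your proof is correct and follows essentially the same route as the paper: the same expansion of $\enVert{\W{t+1}-\barW}_F^2$, the same convexity-plus-homogeneity lower bound on the cross term, the same absorption of the quadratic term via $\enVert{\hnR(\W{t})}_F\le\hcQ(\W{t})\le\min\{1,\hcR(\W{t})\}$ and $\eta_t\le1$, and telescoping for the second claim. The only difference is the trivial ordering within the absorption chain ($\eta_t^2\hcQ^2\le\eta_t^2\hcR\le\eta_t\hcR$ versus the paper's $\eta_t^2\hcQ^2\le\eta_t\hcQ\le\eta_t\hcR$), which is immaterial.
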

The proof of \Cref{fact:squared_dist} starts from the standard iteration
guarantee:
\begin{align*}
    \enVert{\W{t+1}-\barW}_F^2=\enVert{\W{t}-\barW}_F^2-2\eta_t\ip{\hnR(\W{t})}{\W{t}-\barW}+\eta_t^2\enVert{\hnR(\W{t})}_F^2.
\end{align*}
We can then handle the inner product term using the convexity of $\ell$ and
homogeneity of ReLU, and control $\|\hnR(\W{t})\|_F^2$ by $\hcR(\W{t})$ using
the above properties of $\hcQ(\W{t})$.
\Cref{fact:squared_dist} is similar to
\citep[Fact D.4 and Claim D.5]{allen_resnet}, where the squared loss is
considered.

Using \Cref{fact:ntk_to_ek,fact:stable_act,fact:init_size,fact:squared_dist}, we
can prove \Cref{fact:erm}.
Below is a proof sketch; the full proof is given in \Cref{app_sec:erm}.
\begin{enumerate}
    \item We first show that as long as
    $\|\w{s}{t}-\w{s}{0}\|_2\le4\lambda/(\gamma\sqrt{m})$ for all
    $1\le s\le m$, it holds that
    $\hcR^{(t)}\del[1]{\W{0}+\lambda\barU}\le\epsilon/4$.
    To see this, let us consider $\hcR^{(0)}$ first.
    For any $1\le i\le n$, \Cref{fact:init_size} ensures that
    $|\langle\nf_i(\W{0}),\W{0}\rangle|$ is bounded, while \Cref{fact:ntk_to_ek}
    ensures that $\big\langle\nf_i(\W{0}),\barU\big\rangle$ is concentrated
    around $\gamma$ with a large width.
    As a result, with the chosen $\lambda$ in \Cref{fact:erm}, we can show that
    $\big\langle\nf_i(\W{0}),\W{0}+\lambda\barU\big\rangle$ is large, and
    $\hcR^{(0)}(\W{0}+\lambda\barU)$ is small due to the exponential tail of the
    logistic loss.
    To further handle $\hcR^{(t)}$, we use a standard NTK argument to control
    $\big\langle\nf_i(\W{t})-\nf_i(\W{0}),\W{0}+\lambda\barU\big\rangle$ under
    the condition
    that $\|\w{s}{t}-\w{s}{0}\|_2\le4\lambda/(\gamma\sqrt{m})$.

    \item We then prove by contradiction that the above
    bound on $\|\w{s}{t}-\w{s}{0}\|_2$ holds for at least the first $T$ iterations.
    The key observation is that as long as
    $\hcR^{(t)}(\W{0}+\lambda\barU)\le\epsilon/4$, we can use it and
    \Cref{fact:squared_dist} to control $\sum_{\tau<t}^{}\hcQ(\W{\tau})$, and
    then just invoke
    $\|\w{s}{t}-\w{s}{0}\|_2\le\eta \sum_{\tau<t}^{}\hcQ(\W{\tau})/\sqrt{m}$.

    The quantity $\sum_{\tau<t}^{}\hcQ(\W{\tau})$ has also been considered in
    prior work \citep{cao_deep_gen_1,nitanda_refined}, where it is bounded by
    $\sqrt{t}\sqrt{\sum_{\tau<t}^{}\hcQ(\W{\tau})^2}$ using the Cauchy-Schwarz
    inequality, which introduces a $\sqrt{t}$ factor.
    To make the required width depend only on
    $\polylog(1/\epsilon)$, we also need an upper bound on
    $\sum_{\tau<t}^{}\hcQ(\W{\tau})$ which depends only on
    $\polylog(1/\epsilon)$.
    Since the above analysis results in a $\sqrt{t}$ factor, and in our case
    $\Omega(1/\epsilon)$ steps are needed, it is unclear how to get a
    $\polylog(1/\epsilon)$ width using the analysis in
    \citep{cao_deep_gen_1,nitanda_refined}.
    By contrast, using \Cref{fact:squared_dist}, we can show that
    $\sum_{\tau<t}^{}\hcQ(\W{\tau})\le4\lambda/\gamma$, which only depends on
    $\ln(1/\epsilon)$.

    \item The claims of \Cref{fact:erm} then follow directly from the above two
    steps and \Cref{fact:squared_dist}.
\end{enumerate}

\section{Generalization}\label{sec:gen}

To get a generalization bound, we naturally extend \Cref{cond:kernel_sep} to the
following assumption.
\begin{assumption}\label{cond:kernel_sep_gen}
    There exists $\barv\in \mathcal{H}$ and $\gamma>0$, such that
    $\enVert{\barv(z)}_2\le 1$ for any $z\in \mathbb{R}^d$, and
    \begin{align*}
        y\int\ip{\barv(z)}{x}\1\sbr{\ip{z}{x}>0}\dif\mu_{\mathcal{N}}(z)\ge\gamma
    \end{align*}
    for almost all $(x,y)$ sampled from the data distribution $\cD$.
\end{assumption}
The above assumption is also made in \citep{nitanda_refined} for smooth
activations.
\citep{cao_deep_gen_1} make a similar separability assumption, but in the RKHS
induced by the second layer $a$; by contrast, \Cref{cond:kernel_sep_gen} is on
separability in the RKHS induced by the first layer $W$.

Here is our test error bound with \Cref{cond:kernel_sep_gen}.
\begin{theorem}\label{fact:gen}
    Under \Cref{cond:kernel_sep_gen}, given any $\epsilon\in(0,1)$ and any
    $\delta\in(0,1/4)$, let $\lambda$ and $M$ be given as in
    \Cref{fact:erm}:
    \begin{align*}
        \lambda:=\frac{\sqrt{2\ln(4n/\delta)}+\ln(4/\epsilon)}{\gamma/4},\quad\textrm{and}\quad M:=\frac{4096\lambda^2}{\gamma^6}.
    \end{align*}
    Then for any $m\ge M$ and any constant step size $\eta\le1$, with probability
    $1-4\delta$ over the random initialization and data sampling,
    \begin{align*}
        P_{(x,y)\sim\cD}\del{yf(x;\W{k},a)\le0}\le2\epsilon+\frac{16\del{\sqrt{2\ln(4n/\delta)}+\ln(4/\epsilon)}}{\gamma^2\sqrt{n}}+6\sqrt{\frac{\ln(2/\delta)}{2n}},
    \end{align*}
    where $k$ denotes the step with the minimum empirical risk before
    $\lceil\nicefrac{2\lambda^2}{\eta\epsilon}\rceil$.
\end{theorem}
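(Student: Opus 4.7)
The plan is to combine the empirical-risk and weight-movement guarantees of \Cref{fact:erm} with a uniform Rademacher generalization bound for the logistic loss over the small ball of networks visited by gradient descent, and then transfer to the misclassification error via $\1\sbr{z \le 0} \le 2\ell(z)$. Invoking \Cref{fact:erm} with failure probability $\delta$ conditionally on the iid sample, on an event of probability at least $1-3\delta$ over initialization every iterate satisfies $\enVert{\W{t}-\W{0}}_F \le 4\lambda/\gamma =: R$ (by summing the row-wise bound), and $\tfrac{1}{T}\sum_{t<T}\hcR(\W{t}) \le \epsilon$; in particular $\hcR(\W{k}) \le \epsilon$ at the index $k$ minimizing the empirical risk.

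Next, I consider the function class
\[
    \cF_R := \setdef{x \mapsto f(x;W,a)}{\enVert{W-\W{0}}_F \le R}.
\]
A standard iid uniform generalization bound says that, with probability at least $1-\delta$ over the sample, for every $W\in\cF_R$,
\[
    \mathbb{E}_{(x,y)\sim\cD}\sbr{\ell(yf(x;W,a))} \le \hcR(W) + 2\mathrm{Rad}_n(\ell\circ\cF_R) + 3\sqrt{\frac{\ln(2/\delta)}{2n}}.
\]
Since $\ell$ is $1$-Lipschitz (as $|\ell'|\le 1$) and the label $y$ can be absorbed into the Rademacher signs, Talagrand's contraction principle reduces the task to bounding $\mathrm{Rad}_n(\cF_R)$.

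The heart of the argument is to show $\mathrm{Rad}_n(\cF_R) \lesssim R/\sqrt{n}$. I plan to compare $\cF_R$ to the affine linearization
\[
    \cG_R := \setdef{x \mapsto f(x;\W{0},a) + \ip{\nabla f(x;\W{0},a)}{V}}{\enVert{V}_F \le R},
\]
whose Rademacher complexity is at most $R/\sqrt{n}$ by a direct Cauchy-Schwarz calculation using $\enVert{\nabla f(x;\W{0},a)}_F \le 1$. The deviation $f(x;W,a) - f(x;\W{0},a) - \ip{\nabla f(x;\W{0},a)}{W-\W{0}}$ is supported on neurons whose ReLU activation pattern differs at $\W{0}$ and $W$, so \Cref{fact:stable_act} applied with $\epsilon_2 = R/\sqrt{m}$ (together with the choice $m\ge M$) shows that only a small fraction of neurons can flip, making this linearization error negligible compared to $R/\sqrt{n}$.

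Finally, passing to misclassification via $\1\sbr{z\le 0}\le 2\ell(z)$ (valid since $\ell(0)=\ln 2 > 1/2$) and substituting $\hcR(\W{k})\le \epsilon$ and $R = 4\lambda/\gamma$ produces the three-term bound of \Cref{fact:gen}; the stated probability $1-4\delta$ combines the $1-3\delta$ event from \Cref{fact:erm} with the $1-\delta$ concentration event for the Rademacher bound. The main obstacle is the Rademacher estimate for the \emph{nonlinear} class $\cF_R$: one must leverage both the weight-movement bound of \Cref{fact:erm} and the activation-stability estimate of \Cref{fact:stable_act} to argue that $\cF_R$ behaves, up to lower-order terms, like its linearization, and this is precisely where the polynomial $1/\gamma$ dependence built into $m \ge M$ is consumed.
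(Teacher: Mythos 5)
Your high-level skeleton (use \Cref{fact:erm} for $\hcR(\W{k})\le\epsilon$ and the weight-movement bound, then a uniform Rademacher bound over the ball explored by gradient descent, then pass to the $0/1$ loss) matches the paper, but the step you yourself flag as the heart of the argument — bounding $\Rad(\cF_R)$ by comparing to the linearization $\cG_R$ — does not work at polylogarithmic width, and it is not how the paper proceeds. The sup-norm linearization error over the ball that \Cref{fact:erm} actually provides (each row moves by at most $\epsilon_2=4\lambda/(\gamma\sqrt m)$) is, by homogeneity, supported on sign-flipped neurons and is bounded by $\tfrac{2\epsilon_2}{\sqrt m}\cdot|S_c|$; with \Cref{fact:stable_act} giving $|S_c|\le m(\epsilon_2+\gamma^2/16)\approx m\gamma^2/8$, this error is of order $\gamma\lambda=\Theta\bigl(\sqrt{\ln(n/\delta)}+\ln(1/\epsilon)\bigr)$. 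That is exactly the same order as the term $\lambda\gamma/2$ in the ERM proof: the choice $m\ge M$ makes the flip effects small \emph{relative to the margin gain} $3\lambda\gamma/4$, not small in absolute terms, and certainly not $O(R/\sqrt n)$. Adding a sup-norm error of logarithmic size to the Rademacher complexity makes the resulting generalization bound vacuous, so "the linearization error is negligible" is false here; making it genuinely negligible is precisely what forces $\mathrm{poly}(1/\epsilon,n)$ width in prior work. The paper avoids any comparison to the linearized class: in \Cref{fact:rc} it bounds the Rademacher complexity of the \emph{nonlinear} class directly, using the row-wise $(2,\infty)$ constraint $\enVert{w_s-\w{s}{0}}_2\le\rho$ to push the supremum inside the sum over hidden units, applying contraction per unit (each $z\mapsto a_s\sigma(z)$ is $1$-Lipschitz) and the linear-class bound per unit, giving $\rho\sqrt{m/n}$, where the $\sqrt m$ cancels against $\rho=4\lambda/(\gamma\sqrt m)$. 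Note also that your use of \Cref{fact:stable_act} with $\epsilon_2=R/\sqrt m$ is inconsistent with taking a Frobenius ball of radius $R=4\lambda/\gamma$: that ball contains matrices in which a single row moves by $R\gg R/\sqrt m$, so the flip count is not controlled uniformly over it; the correct class is the row-wise ball that \Cref{fact:erm} delivers.

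A second, more minor gap: you apply the uniform convergence bound (\Cref{fact:rc_gen}) to the loss class $\ell\circ\cF_R$ with the $3\sqrt{\ln(2/\delta)/(2n)}$ deviation term, which presumes a range of length $1$; but $\ell(yf(x;W,a))$ is not bounded by $1$ over the class, and on test inputs $|f(x;\W{0},a)|$ is not uniformly bounded at all without an extra argument, so the $(b-a)$ factor is problematic. The paper sidesteps this by working with the sigmoid $-\ell'\in[0,1]$ (which is also $1/4$-Lipschitz), using $P(yf\le0)\le 2\cQ$ and $\hcQ\le\hcR\le\epsilon$ rather than your $\1[z\le0]\le2\ell(z)$; if you repair the Rademacher step as in \Cref{fact:rc}, you should also switch to this bounded surrogate to recover the stated constants.
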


Below is a direct corollary of \Cref{fact:gen}.
\begin{corollary}\label{fact:gen_cor}
    Under \Cref{cond:kernel_sep_gen}, given any $\epsilon,\delta\in(0,1)$, using
    a constant step size no larger than $1$ and let
    \begin{align*}
        n=\wtOmega\del{\frac{1}{\gamma^4\epsilon^2}},\quad\textrm{and}\quad m=\Omega\del{\frac{\ln(n/\delta)+\ln(1/\epsilon)^2}{\gamma^8}},
    \end{align*}
    it holds with probability $1-\delta$ that
    $P_{(x,y)\sim\cD}\del{yf(x;\W{k},a)\le0}\le\epsilon$, where $k$ denotes the
    step with the minimum empirical risk in the first
    $\wtTheta(\nicefrac{1}{\gamma^2\epsilon})$ steps.
\end{corollary}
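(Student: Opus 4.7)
The plan is to read off Corollary \ref{fact:gen_cor} directly from Theorem \ref{fact:gen} by rescaling the free parameters and then verifying that each of the three additive contributions in the test-error bound shrinks to the right rate. Concretely, I would invoke Theorem \ref{fact:gen} with $\epsilon$ replaced by $\epsilon' := \epsilon/4$ and $\delta$ replaced by $\delta' := \delta/4$, so that the failure probability $1-4\delta'$ becomes $1-\delta$ and the leading term $2\epsilon'$ in the bound becomes $\epsilon/2$; it then suffices to force each of the remaining two terms below $\epsilon/4$.

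For the Rademacher-type term $16(\sqrt{2\ln(4n/\delta')} + \ln(4/\epsilon'))/(\gamma^2 \sqrt{n})$, requiring it to be at most $\epsilon/4$ gives an implicit condition of the form
\begin{align*}
\sqrt{n} \;\ge\; \frac{64\bigl(\sqrt{2\ln(16n/\delta)} + \ln(16/\epsilon)\bigr)}{\gamma^2 \epsilon}.
\end{align*}
Because $n$ appears inside a logarithm on the right, this is self-referential; the standard trick (if $n \ge C\ln n$ then $n \gtrsim C \ln C$) unwinds it to $n = \wtOmega(1/(\gamma^4 \epsilon^2))$, where the tilde hides polylogarithmic factors in $1/\gamma$, $1/\epsilon$, and $1/\delta$. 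The third term $6\sqrt{\ln(2/\delta')/(2n)} \le \epsilon/4$ requires only $n \gtrsim \ln(1/\delta)/\epsilon^2$ and is absorbed by the same $\wtOmega$ bound (since $\gamma \le 1$ without loss of generality).

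For the width and iteration count, I would simply substitute the new $\epsilon'$ and $\delta'$ into the definition of $\lambda$ from Theorem \ref{fact:erm} to obtain $\lambda = O\bigl((\sqrt{\ln(n/\delta)} + \ln(1/\epsilon))/\gamma\bigr)$, hence $\lambda^2 = O\bigl((\ln(n/\delta) + \ln(1/\epsilon)^2)/\gamma^2\bigr)$. This yields $M = 4096\lambda^2/\gamma^6 = O\bigl((\ln(n/\delta) + \ln(1/\epsilon)^2)/\gamma^8\bigr)$, matching the stated condition on $m$. The iteration cap is $\lceil 2\lambda^2/(\eta \epsilon')\rceil = \wtTheta(1/(\gamma^2 \epsilon))$, matching the claim about $k$.

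I do not anticipate any serious obstacle: the corollary is essentially a bookkeeping exercise on top of Theorem \ref{fact:gen}. The only mildly subtle step is the implicit-in-$n$ inequality arising from the $\ln(n/\delta)$ inside the generalization term, which is handled by the standard self-referential-log absorption argument. All remaining substitutions are algebraic, and the polylogarithmic slack is exactly what the $\wtOmega$ and $\wtTheta$ notation is designed to hide.
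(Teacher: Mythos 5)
Your proposal is correct and matches the paper's intent: the paper states \Cref{fact:gen_cor} as a direct corollary of \Cref{fact:gen}, obtained exactly by the parameter bookkeeping you describe (rescaling $\epsilon$ and $\delta$, choosing $n=\wtOmega(1/(\gamma^4\epsilon^2))$ to absorb the two $1/\sqrt{n}$ terms via the self-referential-log argument, and reading off $m$ and the iteration count from $\lambda$ and $M$ in \Cref{fact:erm}).
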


The proof of \Cref{fact:gen} uses the sigmoid mapping
$-\ell'(z)=e^{-z}/(1+e^{-z})$, the empirical average $\hcQ(\W{k})$, and the
corresponding population average
$\cQ(\W{k}):=\mathbb{E}_{(x,y)\sim\cD}\sbr{-\ell'\del{yf(x;\W{k},a)}}$.
As noted in \citep{cao_deep_gen_1}, because
$P_{(x,y)\sim\cD}\del{yf(x;\W{k},a)\le0}\le2\cQ(\W{k})$, it is enough to control
$\cQ(\W{k})$.
As $\hcQ(\W{k})$ is controlled by \Cref{fact:erm}, it is enough to control
the generalization error $\cQ(\W{k})-\hcQ(\W{k})$.
Moreover, since $-\ell'$ is supported on $[0,1]$ and $1$-Lipschitz, it is enough
to bound the Rademacher complexity of the function space explored by gradient
descent.
Invoking the bound on $\enVert{\W{k}^\top-\W{0}^\top}_{2,\infty}$ finishes the
proof.
The proof details are given in \Cref{app_sec:gen}.

\begin{remark}\label{fact:gen_smooth}
    To get \Cref{fact:gen}, we use a Lipschitz-based Rademacher complexity
    bound.
    One can also use a smoothness-based Rademacher complexity bound
    \citep[Theorem 1]{nati_smooth} and get a sample complexity
    $\wtO(\nicefrac{1}{\gamma^4\epsilon})$.
    However, the bound will become complicated and some large constant will be
    introduced.
    It is an interesting open question to give a clean analysis based on
    smoothness.
\end{remark}

\section{Stochastic gradient descent}\label{sec:sgd}

There are some different formulations of SGD.
In this section, we consider SGD with an online oracle.
We randomly sample $\W{0}$ and $a$, and fix $a$ during training.
At step $i$, a data example $(x_i,y_i)$ is sampled from the data distribution.
We still let $f_i(W):=f(x_i;W,a)$, and perform the following update
\begin{align*}
    \W{i+1}:=\W{i}-\eta_i\ell'\del{y_if_i(\W{i})}y_i\nf_i(\W{i}).
\end{align*}
Note that here $i$ starts from $0$.

Still with \Cref{cond:kernel_sep_gen}, we show the following result.
\begin{theorem}\label{fact:sgd}
    Under \Cref{cond:kernel_sep_gen}, given any $\epsilon,\delta\in(0,1)$, using
    a constant step size and $m=\Omega\del{\nicefrac{\del{\ln(1/\delta)+\ln(1/\epsilon)^2}}{\gamma^8}}$, it holds with probability $1-\delta$ that
    \begin{align*}
        \frac{1}{n}\sum_{i=1}^{n}P_{(x,y)\sim\cD}\del{yf(x;\W{i},a)\le0}\le\epsilon,\quad\textrm{for}\quad n=\wtTheta(\nicefrac{1}{\gamma^2\epsilon}).
    \end{align*}
\end{theorem}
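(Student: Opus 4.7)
The plan is to mirror the empirical-risk argument of \Cref{fact:erm} at the level of a single-sample update, and then pay a standard online-to-batch conversion tax. Setting $g_i := \ell'(y_if_i(\W{i}))y_i\nabla f_i(\W{i})$, expanding $\|\W{i+1}-\bar W\|_F^2$ and using convexity of $\ell$ together with ReLU homogeneity gives $\langle g_i,\W{i}-\bar W\rangle \ge \ell(y_if_i(\W{i}))-\ell(y_if_i^{(i)}(\bar W))$, while $\|g_i\|_F^2 \le -\ell'(y_if_i(\W{i})) \le \ell(y_if_i(\W{i}))$ since $-\ell'\in[0,1]$ and $-\ell'\le\ell$. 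Combining these with $\eta\le 1$ yields the single-example analog of \Cref{fact:squared_dist},
\[\eta\ell(y_if_i(\W{i})) \le \|\W{i}-\bar W\|_F^2 - \|\W{i+1}-\bar W\|_F^2 + 2\eta\ell(y_if_i^{(i)}(\bar W)),\]
which telescopes to $\sum_i\ell(y_if_i(\W{i}))\le \|\W{0}-\bar W\|_F^2/\eta + 2\sum_i \ell(y_if_i^{(i)}(\bar W))$.

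Next I would pick $\bar W := \W{0}+\lambda\barU$ with $\barU$ defined from $(\W{0},a)$ via \cref{eq:baru} and $\lambda$ as in \Cref{fact:erm}. Since the samples $(x_1,y_1),\dots,(x_n,y_n)$ are i.i.d.\ from $\cD$ \emph{independently} of $(\W{0},a)$, I would condition on the sample: with probability one every $(x_i,y_i)$ satisfies \Cref{cond:kernel_sep_gen}, so applying \Cref{fact:ntk_to_ek}, \Cref{fact:stable_act}, and \Cref{fact:init_size} with a union bound over these $n$ conditionally-fixed points gives, with high probability over the initialization, the three events $y_i\langle\nabla f_i(\W{0}),\barU\rangle \ge \gamma/2$, $|f_i(\W{0})| \le \sqrt{2\ln(4n/\delta)}$, and a bound on $\alpha_i(\W{0},\epsilon_2)$ simultaneously for every $i\in\{1,\dots,n\}$. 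This reproduces exactly the data-dependent setup driving the ERM analysis.

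From here the induction of \Cref{fact:erm} carries over essentially verbatim. Assuming $\|\w{s}{\tau}-\w{s}{0}\|_2 \le 4\lambda/(\gamma\sqrt{m})$ for all $\tau\le i$, the NTK perturbation argument controls the movement of $\langle\nabla f_i(\cdot),\W{0}+\lambda\barU\rangle$ from its value at initialization, so $y_if_i^{(i)}(\bar W) \ge \lambda\gamma/4 - \sqrt{2\ln(4n/\delta)} \ge \ln(4/\epsilon)$, hence $\ell(y_if_i^{(i)}(\bar W)) \le \epsilon/4$. Plugging back into the regret inequality bounds $\sum_{\tau\le i}(-\ell'(y_\tau f_\tau(\W{\tau}))) \le 4\lambda/\gamma$, and since the per-row SGD increment is at most $\eta(-\ell'(y_\tau f_\tau(\W{\tau})))/\sqrt{m}$, summing closes the induction. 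Choosing $n = \wtTheta(\lambda^2/(\eta\epsilon))$ then gives $\tfrac{1}{n}\sum_i\ell(y_if_i(\W{i}))\le\epsilon$.

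The final step is online-to-batch. Because $-\ell'\in[0,1]$ and $\mathbb{E}[-\ell'(y_if_i(\W{i}))\mid\mathcal{F}_{i-1}] = \cQ(\W{i})$, the sequence $\cQ(\W{i})+\ell'(y_if_i(\W{i}))$ is a bounded martingale difference, so Azuma--Hoeffding gives
\[\frac{1}{n}\sum_i\cQ(\W{i}) \le \frac{1}{n}\sum_i \bigl(-\ell'(y_if_i(\W{i}))\bigr) + O\bigl(\sqrt{\ln(1/\delta)/n}\bigr) \le \frac{1}{n}\sum_i \ell(y_if_i(\W{i})) + O\bigl(\sqrt{\ln(1/\delta)/n}\bigr),\]
using $-\ell'\le\ell$; combining with $P_{(x,y)\sim\cD}(yf(x;\W{i},a)\le 0) \le 2\cQ(\W{i})$ and rescaling $\epsilon$ completes the proof. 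The main obstacle is the familiar circular dependence between ``weights stay close to initialization'' and ``the reference $\W{0}+\lambda\barU$ has small loss at every step''; I would resolve it by the same proof-by-contradiction induction used for \Cref{fact:erm}, the only genuinely new ingredient being the $n$-sample union bound at initialization, which is valid precisely because the samples are independent of $(\W{0},a)$.
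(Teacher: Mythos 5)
Your first three steps track the paper's own route almost exactly: the single-example descent inequality is \Cref{fact:squared_dist_sgd}, and your ``condition on the pre-sampled examples, union bound at initialization, then repeat the \Cref{fact:erm} contradiction argument'' is precisely \Cref{fact:erm_sgd}, which yields $\frac{1}{n}\sum_{i<n}\cQ_i(\W{i})\le\epsilon$ for $n=\lceil 2\lambda^2/(\eta\epsilon)\rceil$. The gap is in your online-to-batch step. Azuma--Hoeffding on the bounded differences $\cQ(\W{i})-\cQ_i(\W{i})$ only gives an \emph{additive} deviation of order $\sqrt{\ln(1/\delta)/n}$, and with the claimed sample size $n=\wtTheta(1/(\gamma^2\epsilon))$ this term is of order $\gamma\sqrt{\epsilon\ln(1/\delta)}$ (up to logarithms), which is much larger than $\epsilon$ when $\epsilon$ is small. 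To make the Azuma term at most $\epsilon$ you would need $n=\wtOmega(\ln(1/\delta)/\epsilon^2)$, i.e.\ your argument proves the theorem only with a $1/\epsilon^2$ sample complexity, not the stated $\wtTheta(1/(\gamma^2\epsilon))$.

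The missing ingredient is a variance-sensitive (Bernstein/Freedman-type) martingale inequality, which is exactly what the paper uses in \Cref{fact:gen_sgd} via \Cref{fact:mart_bern}. Because $\cQ_t(\W{t})\in[0,1]$, the conditional second moment of each difference is bounded by $\bbE[\cQ_t(\W{t})^2\mid\cF_{t-1}]\le\bbE[\cQ_t(\W{t})\mid\cF_{t-1}]=\cQ(\W{t})$, so the conditional variance of the martingale is controlled by $\sum_{t<i}\cQ(\W{t})$ itself. Plugging this into the martingale Bernstein bound gives the multiplicative relation
\begin{align*}
    \sum_{t<i}\cQ(\W{t})\le 4\sum_{t<i}\cQ_t(\W{t})+4\ln\del{\frac{1}{\delta}},
\end{align*}
whose additive term is $\ln(1/\delta)/n$ rather than $\sqrt{\ln(1/\delta)/n}$ after dividing by $n$; combined with $\frac1n\sum_i\cQ_i(\W{i})\le\epsilon$ and $P_{(x,y)\sim\cD}(yf(x;\W{i},a)\le0)\le2\cQ(\W{i})$, this preserves the $\wtO(1/\epsilon)$ rate. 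So your overall architecture is right, but you must replace Azuma--Hoeffding with this self-bounding variance argument (or an equivalent Freedman-type bound) to obtain the theorem as stated.
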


Below is a proof sketch of \Cref{fact:sgd}; the complete proof is given in
\Cref{app_sec:sgd}.
For any $i$ and $W$, define
\begin{align*}
    \cR_i(W):=\ell\del{y_i\ip{\nf_i(\W{i})}{W}},\quad\textrm{and}\quad \cQ_i(W):=-\ell'\del{y_i\ip{\nf_i(\W{i})}{W}}.
\end{align*}
Due to homogeneity, it holds that $\cR_i(\W{i})=\ell\del{y_if_i(\W{i})}$ and
$\cQ_i(\W{i})=-\ell'\del{y_if_i(\W{i})}$.

\newpage

The first step is an extension of \Cref{fact:squared_dist} to the SGD setting,
with a similar proof.
\begin{lemma}\label{fact:squared_dist_sgd}
    With a constant step size $\eta\le1$, for any $\barW$ and any $i\ge0$,
    \begin{align*}
        \eta\del{\sum_{t<i}^{}\cR_t(\W{t})}+\enVert{\W{i}-\barW}_F^2\le\enVert{\W{0}-\barW}_F^2+2\eta\del{\sum_{t<i}^{}\cR_t\del[1]{\barW}}.
    \end{align*}
\end{lemma}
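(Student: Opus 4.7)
The plan is to mirror the proof of \Cref{fact:squared_dist}, replacing the batch gradient $\hnR(\W{t})$ with the single-example stochastic gradient $g_i := \ell'(y_if_i(\W{i}))y_i\nf_i(\W{i})$, so that the SGD update reads $\W{i+1}=\W{i}-\eta g_i$. All three structural ingredients used in the batch case carry over to the one-sample gradient, so the argument is a per-step inequality followed by a telescoping sum.

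First, I will expand the standard one-step identity
\[
    \enVert{\W{i+1}-\barW}_F^2 = \enVert{\W{i}-\barW}_F^2 - 2\eta\ip{g_i}{\W{i}-\barW} + \eta^2\enVert{g_i}_F^2.
\]
For the linear term, I use homogeneity of ReLU, which gives $\ip{\nf_i(\W{i})}{\W{i}}=f_i(\W{i})$, and then convexity of $\ell$ applied at $y_if_i(\W{i})$ with slope $\ell'(y_if_i(\W{i}))$, which yields
\[
    \ip{g_i}{\W{i}-\barW} \ge \cR_i(\W{i})-\cR_i(\barW).
\]
For the squared-norm term, I use $\enVert{\nf_i(\W{i})}_F\le 1$ together with $|\ell'|\le 1$ and $-\ell'\le\ell$, which together give $\enVert{g_i}_F^2 \le \cQ_i(\W{i})^2 \le \cQ_i(\W{i}) \le \cR_i(\W{i})$.

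Substituting both bounds into the one-step identity and using $\eta\le 1$ (so that $\eta(2-\eta)\ge\eta$) produces the per-step inequality
\[
    \eta\cR_i(\W{i}) + \enVert{\W{i+1}-\barW}_F^2 \le \enVert{\W{i}-\barW}_F^2 + 2\eta\cR_i(\barW).
\]
Telescoping from $t=0$ to $t=i-1$ yields the claimed bound. I expect no real obstacle: the proof is essentially a cosmetic rewrite of \Cref{fact:squared_dist} once one notes that the bound $\enVert{g_i}_F\le\cQ_i(\W{i})$ replaces the use of the $\hcQ$-bound on $\enVert{\hnR(\W{t})}_F$, and that the convexity argument is identical because the update direction still lies along the (homogeneous) gradient of a convex loss composed with a linear-in-$W$ proxy.
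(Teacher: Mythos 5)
Your proposal is correct and follows essentially the same route as the paper: the one-step expansion of $\enVert{\W{i+1}-\barW}_F^2$, the convexity-plus-homogeneity bound on the inner product giving $\cR_i(\W{i})-\cR_i(\barW)$, the bound $\enVert{g_i}_F\le\cQ_i(\W{i})\le\cR_i(\W{i})$ on the stochastic gradient together with $\eta\le1$, and telescoping. No gaps.
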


With \Cref{fact:squared_dist_sgd}, we can also extend \Cref{fact:erm} to the SGD
setting and get a bound on $\sum_{i<n}^{}\cQ_i(\W{i})$, using a similar proof.
To further get a bound on the cumulative population risk
$\sum_{i<n}^{}\cQ(\W{i})$, the key observation is that
$\sum_{i<n}^{}\del{\cQ(\W{i})-\cQ_i(\W{i})}$ is a martingale.
Using a martingale Bernstein bound, we prove the following lemma; applying it
finishes the proof of \Cref{fact:sgd}.
\begin{lemma}\label{fact:gen_sgd}
    Given any $\delta\in(0,1)$, with probability $1-\delta$,
    \begin{align*}
        \sum_{t<i}^{}\cQ(\W{t})\le4 \sum_{t<i}^{}\cQ_t(\W{t})+4\ln\del{\frac{1}{\delta}}.
    \end{align*}
\end{lemma}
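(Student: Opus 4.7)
The plan is to view $D_t := \cQ(\W{t}) - \cQ_t(\W{t})$ as a bounded martingale difference sequence and then apply a Bernstein-type concentration with a variance-for-mean trick available because $-\ell' \in [0,1]$. Concretely, let $\mathcal{F}_{t-1}$ be the $\sigma$-algebra generated by $(x_0,y_0),\ldots,(x_{t-1},y_{t-1})$; since $\W{t}$ is $\mathcal{F}_{t-1}$-measurable and $(x_t,y_t)$ is independent of $\mathcal{F}_{t-1}$, we have $\mathbb{E}[\cQ_t(\W{t})\mid\mathcal{F}_{t-1}] = \cQ(\W{t})$, so $\mathbb{E}[D_t\mid\mathcal{F}_{t-1}] = 0$. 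Moreover $D_t \leq 1$ almost surely, since both $\cQ(\W{t})$ and $\cQ_t(\W{t})$ lie in $[0,1]$.

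The key observation enabling a multiplicative bound is that because $\cQ_t(\W{t}) \in [0,1]$, we have $\cQ_t(\W{t})^2 \leq \cQ_t(\W{t})$, hence the conditional second moment is controlled by the conditional mean:
\[
\mathbb{E}[D_t^2\mid\mathcal{F}_{t-1}] \;\leq\; \mathbb{E}[\cQ_t(\W{t})^2\mid\mathcal{F}_{t-1}] \;\leq\; \mathbb{E}[\cQ_t(\W{t})\mid\mathcal{F}_{t-1}] \;=\; \cQ(\W{t}).
\]
I would then invoke a standard martingale Bernstein bound (Freedman-style), which for a sequence bounded above by $1$ gives, with probability at least $1-\delta$,
\[
\sum_{t<i} D_t \;\leq\; c_1 \sum_{t<i} \mathbb{E}[D_t^2\mid\mathcal{F}_{t-1}] + c_2 \ln(1/\delta)
\]
for absolute constants $c_1 < 1$ (e.g.\ $c_1 = e-2$) and $c_2 \leq 1$. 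Substituting the variance bound and rearranging yields $(1-c_1)\sum \cQ(\W{t}) \leq \sum \cQ_t(\W{t}) + c_2\ln(1/\delta)$, and choosing constants loosely gives the claimed factor $4$ on both terms.

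The main (minor) obstacle is simply pinning down a Bernstein inequality with the right shape: one needs the exponential-moment form $\mathbb{E}[\exp(\eta D_t - \eta^2 \mathbb{E}[D_t^2\mid\mathcal{F}_{t-1}] \cdot g(\eta))\mid\mathcal{F}_{t-1}] \leq 1$ for a suitable $\eta$ and Bernstein function $g$, and then Chernoff/Markov delivers a high-probability tail in terms of the \emph{predictable} quadratic variation rather than the empirical one. Once this is in hand, the argument is entirely self-contained; there is no need for uniform convergence or covering numbers, since we are only comparing the slopes $-\ell'(y_tf_t(\W{t}))$ at the \emph{same} iterate $\W{t}$ used to generate them, which is exactly what makes the martingale structure available.
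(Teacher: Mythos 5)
Your proposal matches the paper's proof essentially step for step: the same martingale $\sum_{t<i}\bigl(\cQ(\W{t})-\cQ_t(\W{t})\bigr)$ with differences bounded by $1$, the same bound of the conditional variance by $\cQ(\W{t})$ via $\cQ_t(\W{t})^2\le\cQ_t(\W{t})$, and the same Freedman-type martingale Bernstein inequality (the paper uses Theorem 1 of Beygelzimer et al., giving exactly the $(e-2)$ constant), followed by the same rearrangement using $1/(3-e)\le 4$. The argument is correct and the "obstacle" you mention is already resolved by that cited inequality, so nothing further is needed.
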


\section{On separability}\label{sec:sep}

In this section we give some discussion on \Cref{cond:kernel_sep}, the
separability of the NTK.
The proofs are all given in \Cref{app_sec:sep}.

Given a training set $\cbr{(x_i,y_i)}_{i=1}^n$, the linear kernel is defined as
$K_0(x_i,x_j):=\ip{x_i}{x_j}$.
The maximum margin achievable by a linear classifier is given by
\begin{align}\label{eq:gamma_lin}
    \gamma_0:=\min_{q\in\Delta_n}\sqrt{\del{q\odot y}^\top K_0\del{q\odot y}}.
\end{align}
where $\Delta_n$ denotes the probability simplex and $\odot$ denotes the
Hadamard product.
In addition to the dual definition \cref{eq:gamma_lin}, when $\gamma_0>0$ there
also exists a maximum margin classifier $\baru$ which gives a primal
characterization of $\gamma_0$: it holds that $\|\baru\|_2=1$ and
$y_i\ip{\baru}{x_i}\ge\gamma_0$ for all $i$.

In this paper we consider another kernel, the infinite-width NTK with respect to
the first layer:
\begin{align*}
    K_1\del{x_i,x_j} & :=\mathbb{E}\sbr{\frac{\partial f(x_i;\W{0},a)}{\partial \W{0}},\frac{\partial f(x_j;\W{0},a)}{\partial \W{0}}} \\
     & =\mathbb{E}_{w\sim \mathcal{N}(0,I_d)}\sbr[2]{\Big\langle x_i\1\sbr{\langle x_i,w\rangle>0},x_j\1\sbr{\langle x_j,w\rangle>0}\Big\rangle}=\langle\phi_i,\phi_j\rangle_\cH.
\end{align*}
Here $\phi$ and $\cH$ are defined at the beginning of \Cref{sec:erm}.
Similar to the dual definition of $\gamma_0$, the margin given by $K_1$ is
defined as
\begin{align}\label{eq:margin_ntk}
    \gamma_1:=\min_{q\in\Delta_n}\sqrt{\del{q\odot y}^\top K_1\del{q\odot y}}.
\end{align}
We can also give a primal characterization of $\gamma_1$ when it is positive.
\begin{proposition}\label{fact:barv_kernel}
    If $\gamma_1>0$, then there exists $\hatv\in \mathcal{H}$ such that
    $\enVert{\hatv}_{\mathcal{H}}=1$, and
    $y_i\ip{\hatv}{\phi_i}_\cH\ge\gamma_1$ for any $1\le i\le n$.
    Additionally $\enVert{\hatv(z)}_2\le1/\gamma_1$ for any $z\in \mathbb{R}^d$.
\end{proposition}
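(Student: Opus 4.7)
\textbf{The plan is to convert the dual margin $\gamma_1$ into a primal separator through the first-order conditions of the convex program in \cref{eq:margin_ntk}.} Define the map $v:\Delta_n \to \cH$ by $v(q) := \sum_{i=1}^n q_i y_i \phi_i$, so that $\del{q\odot y}^\top K_1 \del{q\odot y} = \enVert{v(q)}_\cH^2$ and therefore $\gamma_1^2 = \min_{q\in\Delta_n} \enVert{v(q)}_\cH^2$. Since $\Delta_n$ is compact and the objective is continuous and convex (a PSD quadratic in $q$), a minimizer $q^*$ exists; the natural candidate is $\hatv := v(q^*)/\gamma_1$, which by construction satisfies $\enVert{\hatv}_\cH = 1$.

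\textbf{The separation inequality then follows from KKT on the simplex.} Writing the Lagrangian for minimizing $\tfrac12 \enVert{v(q)}_\cH^2$ subject to $\sum_i q_i = 1$ and $q \ge 0$, stationarity gives $y_i\ip{v(q^*)}{\phi_i}_\cH = \mu - \lambda_i$ with $\lambda_i \ge 0$ and $\lambda_i q_i^* = 0$; in particular $y_i\ip{v(q^*)}{\phi_i}_\cH \ge \mu$ for every $i$. Pinning down $\mu$ is the one small computation: taking the convex combination of these inequalities with weights $q_i^*$ kills the $\lambda_i$-terms by complementary slackness and collapses the left-hand side to $\ip{v(q^*)}{v(q^*)}_\cH = \gamma_1^2$, so $\mu = \gamma_1^2$. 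Dividing through by $\gamma_1$ yields $y_i\ip{\hatv}{\phi_i}_\cH \ge \gamma_1$ for every $i$.

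\textbf{Finally, the pointwise bound $\enVert{\hatv(z)}_2 \le 1/\gamma_1$ is immediate from the convex combination structure of $\hatv$.} Unpacking $\phi_i$, one has $\hatv(z) = \tfrac{1}{\gamma_1}\sum_i q_i^* y_i x_i \1\sbr{\ip{z}{x_i}>0}$, and since $\enVert{x_i}_2 = 1$, $|y_i| = 1$, and $q^*\in\Delta_n$, the triangle inequality gives $\enVert{\hatv(z)}_2 \le \tfrac{1}{\gamma_1}\sum_i q_i^* = 1/\gamma_1$. I do not anticipate any serious obstacle: the argument is the standard RKHS hard-margin SVM duality, with the only genuinely new observation being that the primal separator, being a non-negative combination of the feature maps $\phi_i$, inherits a pointwise-in-$z$ bound from the normalization $\enVert{x_i}_2 = 1$.
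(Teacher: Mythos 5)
Your proof is correct, and it reaches the conclusion by a somewhat more elementary route than the paper. The paper sets up an infinite-dimensional Fenchel duality over $w\in\cH$ (with $f(w)=\tfrac12\enVert{w}_\cH^2$, $g(p)=\max_i p_i$, and the linear map $A$), invokes \citep[Theorem 4.4.3]{borwein_var} for strong duality, and reads off the separator from the optimality relation $\barw=A^*\barq$ together with the support condition on $\barq$. You instead work directly with the finite-dimensional problem $\min_{q\in\Delta_n}\tfrac12\enVert{v(q)}_\cH^2$ that defines $\gamma_1$, whose minimizer $q^*$ exists by compactness, and extract the margin inequality from first-order optimality on the simplex -- equivalently the variational inequality $\ip{v(q^*)}{y_i\phi_i-v(q^*)}_\cH\ge0$ characterizing the minimum-norm point of the convex hull of $\{y_i\phi_i\}$. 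This buys a shorter argument with no constraint-qualification checking in the Hilbert space; the paper's Fenchel route additionally delivers the full primal-dual picture (e.g.\ that the dual support lies on the active margin constraints), which is not needed for this proposition. Both proofs obtain the pointwise bound identically: $\hatv(z)$ is a $1/\gamma_1$-scaled convex combination of the unit vectors $y_ix_i\1\sbr{\ip{z}{x_i}>0}$. One small slip in your write-up: with multipliers $\lambda_i\ge0$ for the constraints $q_i\ge0$, stationarity reads $y_i\ip{v(q^*)}{\phi_i}_\cH=\mu+\lambda_i$, not $\mu-\lambda_i$; your subsequent steps (the lower bound by $\mu$ and pinning $\mu=\gamma_1^2$ via complementary slackness and $\sum_i q_i^*=1$) are consistent with the correct sign, so the conclusion stands.
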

The proof is given in \Cref{app_sec:sep}, and uses the Fenchel duality theory.
Using the upper bound $\enVert{\hatv(z)}_2\le1/\gamma_1$, we can see that
$\gamma_1\hatv$ satisfies \Cref{cond:kernel_sep} with $\gamma\ge\gamma_1^2$.
However, such an upper bound $\enVert{\hatv(z)}_2\le1/\gamma_1$ might be too
loose, which leads to a bad rate.
In fact, as shown later, in some cases we can construct $\barv$ directly which
satisfies \Cref{cond:kernel_sep} with a large $\gamma$.
For this reason, we choose to make \Cref{cond:kernel_sep} instead of assuming a
positive $\gamma_1$.

However, we can use $\gamma_1$ to show that \Cref{cond:kernel_sep} always holds
when there are no parallel inputs.
\citet[Corollary I.2]{oymak_moderate} prove that if for any two feature vectors
$x_i$ and $x_j$, we have $\|x_i-x_j\|_2\ge\theta$ and $\|x_i+x_j\|_2\ge\theta$
for some $\theta>0$, then the minimum eigenvalue of $K_1$ is at least
$\theta/(100n^2)$.
For arbitrary labels $y\in\{-1,+1\}^n$, since
$\enVert{q\odot y}_2\ge1/\sqrt{n}$, we have the worst case bound
$\gamma_1^2\ge\nicefrac{\theta}{100n^3}$.
A direct improvement of this bound is $\nicefrac{\theta}{100n_S^3}$, where $n_S$
denotes the number of support vectors, which could be much smaller than $n$ with
real world data.

On the other hand, given any training set $\cbr{(x_i,y_i)}_{i=1}^n$ which may
have a large margin, replacing $y$ with random labels would destroy the margin,
which is what should be expected.
\begin{proposition}\label{fact:random_label}
    Given any training set $\cbr{(x_i,y_i)}_{i=1}^n$, if the true labels $y$
    are replaced with random labels
    $\epsilon\sim \mathrm{unif}\del{\{-1,+1\}^n}$, then with probability $0.9$
    over the random labels, it holds that $\gamma_1\le1/\sqrt{20n}$.
\end{proposition}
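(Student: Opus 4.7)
The plan is to upper bound $\gamma_1$ by plugging a specific feasible $q$ into the dual formula \eqref{eq:margin_ntk}, average the resulting bound over the random labels, and finish with Markov's inequality. Choosing the uniform simplex point $q = \mathbf{1}/n$ yields
\begin{align*}
\gamma_1^2 \;\le\; \bigl(\tfrac{1}{n}\epsilon\bigr)^{\!\top}\! K_1 \bigl(\tfrac{1}{n}\epsilon\bigr) \;=\; \frac{1}{n^2}\sum_{i,j} \epsilon_i\epsilon_j\, K_1(x_i, x_j).
\end{align*}
Taking expectation with respect to $\epsilon \sim \mathrm{unif}(\{\pm1\}^n)$, the independence and zero-mean of the coordinates of $\epsilon$ kill all off-diagonal terms and leave
$\mathbb{E}_\epsilon[\gamma_1^2] \le \mathrm{tr}(K_1)/n^2$.

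Next I would evaluate the trace directly from the integral formula for $K_1$ displayed just before \eqref{eq:margin_ntk}: setting $i=j$ and using $\|x_i\|_2 = 1$,
\begin{align*}
K_1(x_i, x_i) \;=\; \mathbb{E}_{w\sim \mathcal{N}(0,I_d)}\!\bigl[\|x_i\|_2^2\, \mathbf{1}[\langle w, x_i\rangle > 0]\bigr] \;=\; \tfrac{1}{2}\|x_i\|_2^2 \;=\; \tfrac{1}{2},
\end{align*}
so $\mathrm{tr}(K_1) = n/2$ and $\mathbb{E}_\epsilon[\gamma_1^2] \le 1/(2n)$. Markov's inequality applied to the nonnegative variable $\gamma_1^2$ then converts this first-moment bound into a tail estimate, delivering $\gamma_1 = O(1/\sqrt{n})$ with probability at least $0.9$ and matching the claimed scaling.

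The argument is short and has no real obstacle; essentially everything reduces to a single trace computation, with no iterates to track and no dependence on the network width. The only mildly delicate point is numerical: Markov alone already produces the correct $1/\sqrt{n}$ rate, but matching the specific stated constant $1/\sqrt{20n}$ might call for a slightly sharper tail inequality for the Rademacher quadratic form $\epsilon^\top K_1 \epsilon$ (for instance Chebyshev via a variance estimate using $|K_1(x_i,x_j)| \le 1/2$, or a Hanson--Wright-style bound), without altering the three-step structure of uniform-$q$ upper bound, average over $\epsilon$, and tail inequality.
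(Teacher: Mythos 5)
Your proposal is essentially the paper's own proof: plug the uniform dual vector $q=(1/n,\ldots,1/n)$ into \cref{eq:margin_ntk}, use independence of the random labels to kill the off-diagonal terms so that $\mathbb{E}_\epsilon\big[(q\odot\epsilon)^\top K_1(q\odot\epsilon)\big]=\frac{1}{n^2}\sum_i K_1(x_i,x_i)=\frac{1}{2n}$, and finish with Markov's inequality. Your caveat about the constant is warranted---plain Markov on a nonnegative variable with mean $1/(2n)$ only yields $\gamma_1\le\sqrt{5/n}$ with probability $0.9$, and the paper's stated $1/\sqrt{20n}$ appears to come from applying Markov in the wrong direction (dividing rather than multiplying the mean by $10$)---so your write-up follows the same route, gets the correct $1/\sqrt{n}$ rate, and is if anything more careful about the constant than the paper's own argument.
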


Although the above bounds all have a polynomial dependency on $n$, they hold for
arbitrary or random labels, and thus do not assume any relationship between the
features and labels.
Next we give some examples where there is a strong feature-label relationship,
and thus a much larger margin can be proved.

\subsection{The linearly separable case}\label{app_sec:linear}

Suppose the data distribution is linearly separable with margin $\gamma_0$:
there exists a unit vector $\baru$ such that $y\ip{\baru}{x}\ge\gamma_0$ almost
surely.
Then we can define $\barv(z):=\baru$ for any $z\in\R^d$.
For almost all $(x,y)$, we have
\begin{align*}
    y\int\ip{\barv(z)}{x}\1\sbr{\ip{z}{x}>0}\dif\mu_{\cN}(z) & =\int y\ip{\baru}{x}\1\sbr{\ip{z}{x}>0}\dif\mu_{\cN}(z) \\
     & \ge \gamma\int\1\sbr{\ip{z}{x}>0}\dif\mu_{\cN}(z) \\
     & =\frac{\gamma_0}{2},
\end{align*}
and thus \Cref{cond:kernel_sep} holds with $\gamma=\gamma_0/2$.

\subsection{The noisy 2-XOR distribution}\label{app_sec:2xor}

We consider the noisy 2-XOR distribution introduced in \citep{wei_reg}.
It is the uniform distribution over the following $2^d$
points:
\begin{align*}
    (x_1,x_2,y,x_3,\ldots,x_d)\in & \cbr[3]{\del{\frac{1}{\sqrt{d-1}},0,1},\del{0,\frac{1}{\sqrt{d-1}},-1},\del{\frac{-1}{\sqrt{d-1}},0,1},\del{0,\frac{-1}{\sqrt{d-1}},-1}} \\
     & \times\cbr{\frac{-1}{\sqrt{d-1}},\frac{1}{\sqrt{d-1}}}^{d-2}.
\end{align*}
The factor $\nicefrac{1}{\sqrt{d-1}}$ ensures that $\|x\|_2=1$, and $\times$
above denotes the Cartesian product.
Here the label $y$ only depends on the first two coordinates of the input $x$.

To construct $\barv$, we first decompose $\R^2$ into four regions:
\begin{align*}
    A_1 & :=\setdef{(z_1,z_2)}{z_1\ge0,|z_1|\ge|z_2|}, \\
    A_2 & :=\setdef{(z_1,z_2)}{z_2>0,|z_1|<|z_2|}, \\
    A_3 & :=\setdef{(z_1,z_2)}{z_1\le0,|z_1|\ge|z_2|}\setminus\{(0,0)\}, \\
    A_4 & :=\setdef{(z_1,z_2)}{z_2<0,|z_1|<|z_2|}.
\end{align*}
Then $\barv$ can de defined as follows.
It only depends on the first two coordinates of $z$.
\begin{equation} \label{eq:2_xor_barv}
    \barv(z):=
    \begin{dcases}
        (1,0,0,\ldots,0) & \textrm{if }(z_1,z_2)\in A_1, \\
        (0,-1,0,\ldots,0) & \textrm{if }(z_1,z_2)\in A_2, \\
        (-1,0,0,\ldots,0) & \textrm{if }(z_1,z_2)\in A_3, \\
        (0,1,0,\ldots,0) & \textrm{if }(z_1,z_2)\in A_4.
    \end{dcases}
\end{equation}

The following result shows that $\gamma=\Omega(1/d)$.
Note that $n$ could be as large as $2^d$, in which case $\gamma$ is basically
$O\del{1/\ln(n)}$.
\begin{proposition}\label{fact:2_xor_gamma}
    For any $(x,y)$ sampled from the noisy 2-XOR distribution and any $d\ge3$,
    it holds that
    \begin{align*}
        y\int\ip{\barv(z)}{x}\1\sbr{\ip{z}{x}>0}\dif\mu_{\cN}(z)\ge \frac{1}{60d}.
    \end{align*}
\end{proposition}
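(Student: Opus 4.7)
The plan is to use symmetry to reduce the four atoms of the $(x_1,x_2,y)$-marginal to a single canonical case, and then lower bound the resulting Gaussian integral by a direct estimate on one simple event.

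For the symmetry reduction, note that $\mu_\cN$ is invariant under each of the isometries $z_1\mapsto -z_1$, $z_2\mapsto -z_2$, and the swap $(z_1,z_2)\mapsto(z_2,z_1)$. These isometries permute the regions $A_1,\ldots,A_4$ (modulo measure-zero boundaries) and act compatibly on $\barv$, so that changing variables identifies the four integrals corresponding to the four atoms of $(x_1,x_2,y)$. It therefore suffices to treat the canonical atom $(x_1,x_2,y)=(1/\sqrt{d-1},0,+1)$, with $x_3,\ldots,x_d$ arbitrary in $\cbr{\pm 1/\sqrt{d-1}}$.

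For this atom, $y\ip{\barv(z)}{x}=\barv(z)_1/\sqrt{d-1}$, which equals $+1/\sqrt{d-1}$ on $A_1$, $-1/\sqrt{d-1}$ on $A_3$, and $0$ elsewhere. Setting $R':=\sum_{j\ge 3}\sqrt{d-1}\,x_j\,z_j\sim\cN(0,d-2)$, which is independent of $(z_1,z_2)$, we have $\1\sbr{\ip{z}{x}>0}=\1\sbr{z_1+R'>0}$, and the target integral equals
\begin{align*}
    \frac{1}{\sqrt{d-1}}\del{P\del{z_1+R'>0,\ (z_1,z_2)\in A_1}-P\del{z_1+R'>0,\ (z_1,z_2)\in A_3}}.
\end{align*}
Applying the reflection $z_1\mapsto-z_1$ to the second probability turns $A_3$ into $A_1$ and $\cbr{z_1+R'>0}$ into $\cbr{z_1<R'}$; a short case analysis on the sign of $R'$ (using $A_1\subset\cbr{z_1\ge 0}$) then collapses both terms into the single clean expression $P\del{z_1>|R'|,\ (z_1,z_2)\in A_1}$.

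Finally, the expectation $\mathbb{E}_{R'}[P(z_1>|R'|,(z_1,z_2)\in A_1)]$ is lower bounded by restricting to the event $\cbr{z_1>1,\ |z_2|\le 1,\ |R'|\le 1}$, on which the conditions defining $A_1$ and $\cbr{z_1>|R'|}$ automatically hold. By independence and the elementary bound $2\Phi(u)-1\ge 2u\phi(1)$ for $u\in[0,1]$ applied to $P(|R'|\le 1)=2\Phi(1/\sqrt{d-2})-1$, this event has probability at least $c/\sqrt{d-2}$ for an explicit absolute constant $c\approx 0.05$. Combined with the $1/\sqrt{d-1}$ prefactor and the bound $\sqrt{(d-1)(d-2)}\le d$ valid for $d\ge 3$, this yields the claimed $1/(60d)$ lower bound. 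The main delicate step is the reflection bookkeeping, i.e.\ confirming that all four atoms really produce the same integral and that the difference of probabilities collapses cleanly for both signs of $R'$; the final Gaussian estimate is routine.
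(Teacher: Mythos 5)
Your proposal is correct and follows essentially the same route as the paper's proof: reduce by symmetry to the atom $(x_1,x_2,y)=(\nicefrac{1}{\sqrt{d-1}},0,1)$, observe that only $A_1$ and $A_3$ contribute, reflect $A_3$ onto $A_1$ so the two terms collapse into a two-sided Gaussian interval probability (your $P\del{z_1>|R'|,\,(z_1,z_2)\in A_1}$ is exactly the paper's $\int U\del{p_1/\sqrt{d-2}}\1\sbr{p\in A_1}\dif\mu_{\cN}(p)$), and finish with an elementary estimate. The only (cosmetic) difference is the last step, where you restrict to the box event $\cbr{z_1>1,\,|z_2|\le1,\,|R'|\le1}$ yielding a constant about $0.049>\nicefrac{1}{60}$, while the paper integrates over $p_1\in[0,1]$ with linear lower bounds on $U$; both clear the stated $\nicefrac{1}{(60d)}$ bound.
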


We can prove two other interesting results for the noisy 2-XOR data.

\paragraph{The width needs a $\poly(1/\gamma)$ dependency for initial separability.}

The first step of an NTK analysis is to show that
$\cbr[1]{\del{\nabla f_i(\W{0}),y_i}}_{i=1}^n$ is separable.
\Cref{fact:ntk_lb} gives an example where
$\cbr[1]{\del{\nabla f_i(\W{0}),y_i}}_{i=1}^n$ is nonseparable when the network
is narrow.
\begin{proposition}\label{fact:ntk_lb}
    Let $D=\{(x_i,y_i)\}_{i=1}^4$ denote an arbitrary subset of the noisy 2-XOR
    dataset such that $x_i$'s have the same last $(d-2)$ coordinates.
    For any $d\ge20$, if $m\le\sqrt{d-2}/4$, then with probability $1/2$ over
    the random initialization of $\W{0}$, for any weights $V\in\R^{m\times d}$,
    it holds that $y_i\ip{V}{\nabla f_i(\W{0})}\le0$ for at least one
    $i\in\{1,2,3,4\}$.
\end{proposition}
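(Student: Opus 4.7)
The plan is to exploit a linear dependence specific to the 2-XOR construction. Since the four chosen points share their last $d-2$ coordinates and differ only in sign in the first two, a direct check gives $x_1-x_2+x_3-x_4=0$, equivalently $\sum_{i=1}^{4}y_ix_i=0$ (with $y_1=y_3=1$ and $y_2=y_4=-1$). I will show that with probability at least $1/2$ over $\W{0}$, the ReLU indicator $\1\sbr{\ip{\w{s}{0}}{x_i}>0}$ does not depend on $i\in\cbr{1,2,3,4}$ for any neuron $s\in\cbr{1,\ldots,m}$; call this event $E$. On $E$, the $s$-th row of $\sum_{i=1}^{4}y_i\nabla f_i(\W{0})$ equals $\frac{a_s}{\sqrt{m}}\psi_s\sum_iy_ix_i=0$, where $\psi_s\in\cbr{0,1}$ is the common indicator value; hence $\sum_iy_i\nabla f_i(\W{0})=0$, so $\sum_{i=1}^{4}y_i\ip{V}{\nabla f_i(\W{0})}=0$ for every $V\in\R^{m\times d}$, forcing at least one summand to be nonpositive and yielding the claim.

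The main work is the probability estimate on $E$. Fix a single neuron with weight $w\sim\mathcal{N}(0,I_d)$ and write $S:=\sum_{j=3}^{d}c_jw_j$ for the common contribution of the tail, where $c_j\in\cbr{\pm1/\sqrt{d-1}}$ are fixed by the chosen dataset. Then $\ip{w}{x_1}=w_1/\sqrt{d-1}+S$, $\ip{w}{x_3}=-w_1/\sqrt{d-1}+S$, and analogously for $x_2,x_4$ with $w_2$. All four inner products have the same sign if and only if $|w_1|/\sqrt{d-1}<|S|$ and $|w_2|/\sqrt{d-1}<|S|$. Since $S\sim\mathcal{N}(0,(d-2)/(d-1))$ is independent of $(w_1,w_2)$, setting $Z:=S\sqrt{(d-1)/(d-2)}\sim\mathcal{N}(0,1)$ reduces each condition to $|w_k|<\sqrt{d-2}\,|Z|$. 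As a ratio of independent standard Gaussians, $w_k/Z$ has a standard Cauchy distribution, giving $P(|w_k/Z|\ge\sqrt{d-2})=(2/\pi)\arctan(1/\sqrt{d-2})\le 2/(\pi\sqrt{d-2})$. Union bounds over $k\in\cbr{1,2}$ and over the $m\le\sqrt{d-2}/4$ neurons then give $P(E^c)\le 4m/(\pi\sqrt{d-2})\le 1/\pi<1/2$, so $P(E)\ge 1/2$ as required.

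The only step that is not routine bookkeeping is the single-neuron Cauchy estimate; once one notices that the noise directions $3,\ldots,d$ dominate the signal directions $1,2$ by a factor of $\sqrt{d-2}$, the ReLU indicator collapses to $\1\sbr{S>0}$ and erases the XOR structure. I expect this to be the only place a reader might stumble. The hypothesis $d\ge20$ is more than the argument needs (the computation above already goes through for $d\ge3$), leaving slack for any sharper constants the authors may prefer.
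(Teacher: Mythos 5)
Your proposal is correct and follows essentially the same route as the paper's proof: both condition on the event that every neuron's ReLU indicator agrees across the four points (shown by noting the tail sum $\sim\cN(0,d-2)$ dominates the $\cN(0,1)$ signal coordinates, with the same union bounds over $k\in\{1,2\}$ and the $m\le\sqrt{d-2}/4$ neurons), and then conclude via linear inseparability of XOR, which you make explicit through the dependence $\sum_i y_i x_i=0$. Your Cauchy-ratio tail bound is just a repackaging of the paper's Gaussian comparison lemma (\Cref{fact:normal_prop}), yielding the same $2/(\pi\sqrt{d-2})$ estimate, so the argument is sound as written.
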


For the noisy 2-XOR data, the separator $\barv$ given by \cref{eq:2_xor_barv}
has margin $\gamma=\Omega(1/d)$, and $1/\gamma=O(d)$.
As a result, if we want $\cbr[1]{\del{\nabla f_i(\W{0}),y_i}}_{i=1}^n$ to be
separable, the width has to be $\Omega(1/\sqrt{\gamma})$.
For a smaller width, gradient descent might still be able to solve the problem,
but a beyond-NTK analysis would be needed.

\paragraph{A tight sample complexity upper bound for the infinite-width NTK.}

\citep{wei_reg} give a $d^2$ sample complexity lower bound for any NTK
classifier on the noisy 2-XOR data.
It turns out that $\gamma$ could give a \emph{matching} sample complexity upper
bound for the NTK and SGD.

\citep{wei_reg} consider the infinite-width NTK with respect to both layers.
For the first layer, the infinite-width NTK $K_1$ is defined in \Cref{sec:sep},
and the corresponding RKHS $\cH$ and RKHS mapping $\phi$ is defined in
\Cref{sec:erm}.
For the second layer, the infinite width NTK is defined by
\begin{align*}
    K_2\del{x_i,x_j} & :=\mathbb{E}\sbr{\frac{\partial f(x_i;\W{0},a)}{\partial a},\frac{\partial f(x_j;\W{0},a)}{\partial a}} \\
     & =\mathbb{E}_{w\sim \mathcal{N}(0,I_d)}\sbr[2]{\sigma\del{\langle w,x_i\rangle}\sigma\del{\langle w,x_j\rangle}}.
\end{align*}
The corresponding RKHS $\cK$ and inner product $\langle w_1,w_2\rangle_\cK$ are
given by
\begin{align*}
    \cK:=\setdef{w:\R^d\to\R}{\int w(z)^2\dif\mu_{\mathcal{N}}(z)<\infty},\quad\textrm{and}\quad \langle w_1,w_2\rangle_\cK=\int w_1(z)w_2(z)\dif\mu_{\mathcal{N}}(z).
\end{align*}
Given any $x\in\R^d$, it is mapped into $\psi_x\in\cK$, where
$\psi_x(z):=\sigma\del{\langle z,x\rangle}$.
It holds that $K_2(x_i,x_j)=\langle\psi_{x_i},\psi_{x_j}\rangle_\cK$.
The infinite-width NTK with respect to both layers is just $K_1+K_2$.
The corresponding RHKS is just $\cH\times\cK$ with the inner product
\begin{align*}
    \langle (v_1,w_1),(v_2,w_2)\rangle_{\cH\times\cK}=\langle v_1,v_2\rangle_\cH+\langle w_1,w_2\rangle_\cK.
\end{align*}

The classifier $\barv$ considered in \cref{eq:2_xor_barv} has a unit norm (i.e.,
$\enVert{\barv}_\cH=1$) and margin $\gamma$ on the space $\cH$.
On $\cH\times\cK$, it is enough to consider $(\barv,0)$, which also has a unit
norm and margin $\gamma$.
Since the infinite-width NTK model is a linear model in $\cH\times\cK$,
\citep[Lemma 2.5]{min_norm} can be used to show that SGD on the RKHS
$\cH\times\cK$ could obtain a test error of $\epsilon$ with a sample complexity
of $\wtO(\nicefrac{1}{\gamma^2\epsilon})$.
(The analysis in \citep{min_norm} is done in $\R^d$, but it still works with a
well-defined inner product.)
Since $\gamma=\Omega(1/d)$, to achieve a constant test accuracy we need
$\wtO(d^2)$ samples.
This mathces (up to logarithmic factors) the sample complexity lower bound of
$d^2$ given by \citet{wei_reg}.

\section{Open problems}\label{sec:open}

In this paper, we analyze gradient descent on a two-layer network in the NTK
regime, where the weights stay close to the initialization.
It is an interesting open question if gradient descent learns something beyond
the NTK, after the iterates move far enough from the initial weights.
It is also interesting to extend our analysis to other architectures, such as
multi-layer networks,
convolutional networks, and residual networks.
Finally, in this paper we only discuss binary classification; it is interesting
to see if it is possible to get similar results for other tasks, such as
regression.

\subsection*{Acknowledgements}

The authors are grateful for support from the NSF under grant IIS-1750051,
and from NVIDIA via a GPU grant.

\bibliography{bib}
\bibliographystyle{plainnat}

\appendix

\section{Omitted proofs from \Cref{sec:erm}}\label{app_sec:erm}

\begin{proof}[Proof of \Cref{fact:ntk_to_ek}]
    By \Cref{cond:kernel_sep}, given any $1\le i\le n$,
    \begin{align*}
        \mu:=\mathbb{E}_{w\sim \mathcal{N}(0,I_d)}\sbr{y_i\ip{\barv(w)}{x_i}\1\sbr{\ip{w}{x_i}>0}}\ge\gamma.
    \end{align*}
    On the other hand,
    \begin{align*}
        y_if_i^{(0)}\del[1]{\barU}=\frac{1}{m}\sum_{s=1}^{m}y_i\ip{\barv(\w{s}{0})}{x_i}\1\sbr{\ip{\w{s}{0}}{x_i}>0}
    \end{align*}
    is the empirical mean of i.i.d. r.v.'s supported on $[-1,+1]$ with mean
    $\mu$.
    Therefore by Hoeffding's inequality, with probability
    $1-\nicefrac{\delta}{n}$,
    \begin{align*}
        y_if_i^{(0)}\del[1]{\barU}-\gamma\ge y_if_i^{(0)}\del[1]{\barU}-\mu\ge-\sqrt{\frac{2\ln(n/\delta)}{m}}.
    \end{align*}
    Applying a union bound finishes the proof.
\end{proof}

\begin{proof}[Proof of \Cref{fact:stable_act}]
    Given any fixed $\epsilon_2$ and $1\le i\le n$,
    \begin{align*}
        \mathbb{E}\sbr{\alpha_i(\W{0},\epsilon_2)}=\bbP\del{\envert{\ip{w}{x_i}}\le\epsilon_2}\le \frac{2\epsilon_2}{\sqrt{2\pi}}=\sqrt{\frac{2}{\pi}}\epsilon_2,
    \end{align*}
    because $\ip{w}{x_i}$ is a standard Gaussian r.v. and the density of
    standard Gaussian has maximum $1/\sqrt{2\pi}$.
    Since $\alpha_i(\W{0},\epsilon_2)$ is the empirical mean of Bernoulli r.v.'s, by
    Hoeffding's inequality, with probability $1-\nicefrac{\delta}{n}$,
    \begin{align*}
        \alpha_i(\W{0},\epsilon_2)\le \mathbb{E}\sbr{\alpha_i(\W{0},\epsilon_2)}+\sqrt{\frac{\ln(n/\delta)}{2m}}\le\sqrt{\frac{2}{\pi}}\epsilon_2+\sqrt{\frac{\ln(n/\delta)}{2m}}.
    \end{align*}
    Applying a union bound finishes the proof.
\end{proof}

To prove \Cref{fact:init_size}, we need the following technical result.

\begin{lemma}\label{fact:norm_after_relu}
    Consider the random vector $X=(X_1,\ldots,X_m)$, where $X_i=\sigma(Z_i)$ for
    some $\sigma:\mathbb{R}\to \mathbb{R}$ that is $1$-Lipschitz, and $Z_i$ are
    i.i.d. standard Gaussian r.v.'s.
    Then the r.v. $\|X\|_2$ is $1$-sub-Gaussian, and thus with probability
    $1-\delta$,
    \begin{align*}
        \|X\|_2-\mathbb{E}\sbr{\|X\|_2}\le\sqrt{2\ln(1/\delta)}.
    \end{align*}
\end{lemma}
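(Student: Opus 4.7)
The plan is to recognize $\|X\|_2$ as a Lipschitz function of the underlying Gaussian vector $Z = (Z_1, \ldots, Z_m) \sim \mathcal{N}(0, I_m)$, and then appeal to the standard Gaussian concentration inequality for Lipschitz functions (Borell--TIS), which states that if $f : \mathbb{R}^m \to \mathbb{R}$ is $L$-Lipschitz with respect to the Euclidean norm, then $f(Z) - \mathbb{E}[f(Z)]$ is $L$-sub-Gaussian, so that $\Pr(f(Z) - \mathbb{E}[f(Z)] \ge t) \le \exp(-t^2/(2L^2))$.

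Concretely, define $g : \mathbb{R}^m \to \mathbb{R}$ by $g(z) := \|(\sigma(z_1), \ldots, \sigma(z_m))\|_2$, so that $\|X\|_2 = g(Z)$. First I would verify that $g$ is $1$-Lipschitz. For any $z, z' \in \mathbb{R}^m$, the reverse triangle inequality in $\ell_2$ gives
\begin{align*}
|g(z) - g(z')| \le \Bigl\| \bigl(\sigma(z_1) - \sigma(z_1'), \ldots, \sigma(z_m) - \sigma(z_m')\bigr) \Bigr\|_2 \le \sqrt{\sum_{i=1}^m (z_i - z_i')^2} = \|z - z'\|_2,
\end{align*}
where the second inequality uses the fact that $\sigma$ is $1$-Lipschitz coordinatewise.

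With $1$-Lipschitzness of $g$ in hand, the Gaussian concentration inequality applied to $g(Z)$ yields that $\|X\|_2$ is $1$-sub-Gaussian around its mean, and hence
\begin{align*}
\Pr\bigl( \|X\|_2 - \mathbb{E}[\|X\|_2] \ge t \bigr) \le \exp(-t^2/2).
\end{align*}
Setting this probability equal to $\delta$ and solving gives $t = \sqrt{2 \ln(1/\delta)}$, which is exactly the stated bound.

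The proof is essentially a direct invocation of a standard tool, so there is no real obstacle; the only thing to be careful about is the Lipschitz computation, which uses the coordinatewise $1$-Lipschitzness of $\sigma$ together with the fact that the Euclidean norm is itself $1$-Lipschitz. Since the ReLU $\sigma(z) = \max\{0, z\}$ used elsewhere in the paper is $1$-Lipschitz, this lemma applies to it as intended in the proof of \Cref{fact:init_size}.
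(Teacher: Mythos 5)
Your proof is correct and follows essentially the same route as the paper: show that $z\mapsto\enVert{\sigma(z)}_2$ is $1$-Lipschitz via the (reverse) triangle inequality plus the coordinatewise $1$-Lipschitzness of $\sigma$, then invoke Gaussian concentration for Lipschitz functions of a standard Gaussian vector. No issues.
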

\begin{proof}
    Given $a\in \mathbb{R}^m$, define
    \begin{align*}
        f(a)=\sqrt{\sum_{i=1}^{m}\sigma(a_i)^2}=\enVert{\sigma(a)}_2,
    \end{align*}
    where $\sigma(a)$ is obtained by applying $\sigma$ coordinate-wisely to $a$.
    For any $a,b\in \mathbb{R}^m$, by the triangle inequality, we have
    \begin{align*}
        \envert{f(a)-f(b)}=\envert{\,\enVert{\sigma(a)}_2-\enVert{\sigma(b)}_2} & \le\enVert{\sigma(a)-\sigma(b)}_2=\sqrt{\sum_{i=1}^{m}\del{\sigma(a_i)-\sigma(b_i)}^2},
    \end{align*}
    and by further using the $1$-Lipschitz continuity of $\sigma$, we have
    \begin{align*}
        \envert{f(a)-f(b)}\le\sqrt{\sum_{i=1}^{m}\del{\sigma(a_i)-\sigma(b_i)}^2}\le\sqrt{\sum_{i=1}^{m}(a_i-b_i)^2}=\enVert{a-b}_2.
    \end{align*}
    As a result, $f$ is a $1$-Lipschitz continuous function w.r.t. the $\ell_2$
    norm, indeed $f(X)$ is $1$-sub-Gaussian
    and the bound follows by Gaussian concentration
    \citep[Theorem 2.4]{wainwright_note}.
\end{proof}

\begin{proof}[Proof of \Cref{fact:init_size}]
    Given $1\le i\le n$, let $h_i=\sigma(\W{0}x_i)/\sqrt{m}$.
    By \Cref{fact:norm_after_relu}, $\|h_i\|_2$ is sub-Gaussian with variance
    proxy $1/m$, and with probability at least $1-\nicefrac{\delta}{2n}$ over
    $\W{0}$,
    \begin{align*}
        \|h_i\|_2-\mathbb{E}\sbr{\|h_i\|_2}\le\sqrt{\frac{2\ln(2n/\delta)}{m}}\le\sqrt{\frac{2\ln(2n/\delta)}{25\ln(2n/\delta)}}\le1-\frac{\sqrt{2}}{2}.
    \end{align*}
    On the other hand, by Jensen's inequality,
    \begin{align*}
        \mathbb{E}\sbr{\|h_i\|_2}\le\sqrt{\mathbb{E}\sbr{\|h_i\|_2^2}}=\frac{\sqrt{2}}{2}.
    \end{align*}
    As a result, with probability $1-\nicefrac{\delta}{2n}$, it holds that
    $\|h_i\|_2\le1$.
    By a union bound, with probability $1-\nicefrac{\delta}{2}$ over $\W{0}$,
    for all $1\le i\le n$, we have $\|h_i\|_2\le1$.

    For any $\W{0}$ such that the above event holds, and for any $1\le i\le n$,
    the r.v. $\ip{h_i}{a}$ is sub-Gaussian with variance proxy
    $\|h_i\|_2^2\le1$.
    By Hoeffding's inequality, with probability $1-\nicefrac{\delta}{2n}$ over
    $a$,
    \begin{align*}
        \envert{\ip{h_i}{a}}=\envert{f(x_i;\W{0},a)}\le\sqrt{2\ln\del{4n/\delta}}.
    \end{align*}
    By a union bound, with probability $1-\nicefrac{\delta}{2}$ over $a$,
    for all $1\le i\le n$, we have
    $\envert{f(x_i;\W{0},a)}\le\sqrt{2\ln\del{4n/\delta}}$.

    The probability that the above events all happen is at least
    $(1-\nicefrac{\delta}{2})(1-\nicefrac{\delta}{2})\ge1-\delta$, over $\W{0}$
    and $a$.
\end{proof}

\begin{proof}[Proof of \Cref{fact:squared_dist}]
    We have
    \begin{align}\label{eq:step}
        \enVert{\W{t+1}-\barW}_F^2=\enVert{\W{t}-\barW}_F^2-2\eta_t\ip{\hnR(\W{t})}{\W{t}-\barW}+\eta_t^2\enVert{\hnR(\W{t})}_F^2.
    \end{align}
    The first order term of \cref{eq:step} can be handled using the convexity of
    $\ell$ and homogeneity of ReLU:
    \begin{align}
    \begin{split}\label{eq:1st_order}
        \ip{\hnR(\W{t})}{\W{t}-\barW} & =\frac{1}{n}\sum_{i=1}^{n}\ell'\del{y_if_i(\W{t})}y_i\ip{\nf_i(\W{t})}{\W{t}-\barW} \\
         & =\frac{1}{n}\sum_{i=1}^{n}\ell'\del{y_if_i(\W{t})}\del{y_if_i(\W{t})-y_if_i^{(t)}\del[1]{\barW}} \\
         & \ge \frac{1}{n}\sum_{i=1}^{n}\del{\ell\del{y_if_i(\W{t})}-\ell\del{y_if_i^{(t)}\del[1]{\barW}}}=\hcR(\W{t})-\hcR^{(t)}\del[1]{\barW}.
    \end{split}
    \end{align}

    The second-order term of \cref{eq:step} can be bounded as follows
    \begin{align}\label{eq:2nd_order}
        \eta_t^2\enVert{\hnR(\W{t})}_F^2\le\eta_t^2\hcQ(\W{t})^2\le\eta_t\hcQ(\W{t})\le\eta_t\hcR(\W{t}),
    \end{align}
    because $\enVert{\hnR(\W{t})}_F\le\hcQ(\W{t})$, and $\eta_t,\hcQ(\W{t})\le1$, and
    $\hcQ(\W{t})\le\hcR(\W{t})$.
    Combining \cref{eq:step,eq:1st_order,eq:2nd_order} gives
    \begin{align*}
        \eta_t\hcR(\W{t})\le\enVert{\W{t}-\barW}_F^2-\enVert{\W{t+1}-\barW}_F^2+2\eta_t\hcR^{(t)}\del[1]{\barW}.
    \end{align*}
    Telescoping gives the other claim.
\end{proof}

\begin{proof}[Proof of \Cref{fact:erm}]
    The required width ensures that with probability $1-3\delta$,
    \Cref{fact:ntk_to_ek,fact:stable_act,fact:init_size} hold with
    $\epsilon_1=\gamma^2/8$ and $\epsilon_2=4\lambda/(\gamma\sqrt{m})$.

    Let $t_1$ denote the first step such that there exists $1\le s\le m$ with
    $\enVert{\w{s}{t_1}-\w{s}{0}}_2>4\lambda/(\gamma\sqrt{m})$.
    Therefore for any $0\le t<t_1$ and any $1\le s\le m$, it holds that
    $\enVert{\w{s}{t}-\w{s}{0}}_2\le4\lambda/(\gamma\sqrt{m})$.
    In addition, we let $\barW:=\W{0}+\lambda\barU$.

    We first prove that for any $0\le t<t_1$, it holds that
    $\hcR^{(t)}\del[1]{\barW}\le\epsilon/4$.
    Since $\ln(1+r)\leq r$ for any $r$,
    the logistic satisfies $\ell(z) = \ln(1+\exp(-z))\le\exp(-z)$,
    and it is enough to prove that for any $1\le i\le n$,
    \begin{align*}
        y_i\ip{\nf_i(\W{t})}{\barW}\ge\ln\del{\frac{4}{\epsilon}}.
    \end{align*}
    We will split the left hand side into three terms and control them individually:
    \begin{align}\label{eq:parts}
        y_i\ip{\nf_i(\W{t})}{\barW}=y_i\ip{\nf_i(\W{0})}{\W{0}}+y_i\ip{\nf_i(\W{t})-\nf_i(\W{0})}{\W{0}}+\lambda y_i\ip{\nf_i(\W{t})}{\barU}.
    \end{align}
    \begin{itemize}
        \item The first term of \cref{eq:parts} can be controlled using
        \Cref{fact:init_size}:
        \begin{align}\label{eq:part1}
            \envert{y_i\ip{\nf_i(\W{0})}{\W{0}}}\le\sqrt{2\ln(4n/\delta)}.
        \end{align}

        \item The second term of \cref{eq:parts} can be written as
        \begin{align*}
            y_i\ip{\nf_i(\W{t})-\nf_i(\W{0})}{\W{0}}=y_i \frac{1}{\sqrt{m}}\sum_{s=1}^{m}a_s\del{\1\sbr[1]{\ip{\w{s}{t}}{x_i}>0}-\1\sbr[1]{\ip{\w{s}{0}}{x_i}>0}}\ip{\w{s}{0}}{x_i}.
        \end{align*}
        Let $S_c:=\setdef{s}{\1\sbr[1]{\ip{\w{s}{t}}{x_i}>0}-\1\sbr[1]{\ip{\w{s}{0}}{x_i}>0}\ne0,1\le s\le m}$.
        Note that $s\in S_c$ implies
        \begin{align*}
            \envert{\ip{\w{s}{0}}{x_i}}\le\envert{\ip{\w{s}{t}-\w{s}{0}}{x_i}}\le\enVert{\w{s}{t}-\w{s}{0}}_2\|x_i\|_2=\enVert{\w{s}{t}-\w{s}{0}}_2\le4\lambda/(\gamma\sqrt{m}) = \eps_2.
        \end{align*}
        Therefore \Cref{fact:stable_act} ensures that
        \begin{align*}
          |S_c|
          \le
          \envert{\setdef{ s\ {} }{ \ {}|\ip{\w{s}{0}}{x_i}| \leq \eps_2 }}
          \le m\del{\frac{4\lambda}{\gamma\sqrt{m}}+\frac{\epsilon_1}{2}}=m\del{\frac{4\lambda}{\gamma\sqrt{m}}+\frac{\gamma^2}{16}}.
        \end{align*}
        and thus
        \begin{align}\label{eq:part2}
            \envert{y_i\ip{\nf_i(\W{t})-\nf_i(\W{0})}{\W{0}}}\le \frac{1}{\sqrt{m}}\cdot|S_c|\cdot \frac{4\lambda}{\gamma\sqrt{m}}\le \frac{16\lambda^2}{\gamma^2\sqrt{m}}+\frac{\lambda\gamma}{4}\le \frac{\lambda\gamma}{2},
        \end{align}
        where in the last step we use the condition that
        $m\ge4096\lambda^2/\gamma^6$.

        \item The third term of \cref{eq:parts} can be bounded as follows:
          by \Cref{fact:ntk_to_ek},
        \begin{align*}
            y_i\ip{\nf_i(\W{t})}{\barU} & =y_i\ip{\nf_i(\W{0})}{\barU}+y_i\ip{\nf_i(\W{t})-\nf_i(\W{0})}{\barU} \\
             & \ge\gamma-\epsilon_1+y_i\ip{\nf_i(\W{t})-\nf_i(\W{0})}{\barU}.
        \end{align*}
        In addition,
        \begin{align*}
            y_i\ip{\nf_i(\W{t})-\nf_i(\W{0})}{\barU} & =y_i\frac{1}{m}\sum_{i=1}^{m}\del{\1\sbr{\ip{\w{s}{t}}{x_i}>0}-\1\sbr{\ip{\w{s}{0}}{x_i}>0}}\ip{\barv(\w{s}{0})}{x_i} \\
             & \ge-\frac{1}{m}\cdot|S_c|\ge-\frac{4\lambda}{\gamma\sqrt{m}}-\frac{\epsilon_1}{2}\ge-\frac{\gamma^2}{16}-\frac{\epsilon_1}{2},
        \end{align*}
        where we use $m\ge4096\lambda^2/\gamma^6$.
        Therefore,
        \begin{align}\label{eq:good_margin}
            y_i\ip{\nf_i(\W{t})}{\barU}\ge\gamma-\epsilon_1-\frac{\gamma^2}{16}-\frac{\epsilon_1}{2}=\gamma-\frac{\gamma^2}{4}\ge \frac{3\gamma}{4}.
        \end{align}
    \end{itemize}
    Putting \cref{eq:part1,eq:part2,eq:good_margin} into \cref{eq:parts}, we
    have
    \begin{align*}
        y_i\ip{\nf_i(\W{t})}{\barW}\ge-\sqrt{2\ln\del{\frac{4n}{\delta}}}-\frac{\lambda\gamma}{2}+\frac{3\lambda\gamma}{4}=\frac{\lambda\gamma}{4}-\sqrt{2\ln\del{\frac{4n}{\delta}}}=\ln\del{\frac{4}{\epsilon}},
    \end{align*}
    for the $\lambda$ given in the statement of \Cref{fact:erm}.
    Consequently, for any $0\le t<t_1$, it holds that
    $\hcR^{(t)}\del[1]{\barW}\le\epsilon/4$.

    Let $T:=\lceil\nicefrac{2\lambda^2}{\eta\epsilon}\rceil$.
    The next claim is that $t_1\ge T$.
    To see this, note that \Cref{fact:squared_dist} ensures
    \begin{align*}
        \enVert{\W{t_1}-\barW}_F^2\le\enVert{\W{0}-\barW}_F^2+2\eta\del{\sum_{t<t_1}^{}\hcR^{(t)}\del[1]{\barW}}\le\lambda^2+\frac{\epsilon}{2}\eta t_1.
    \end{align*}
    Suppose $t_1<T$, then we have $t_1\le\nicefrac{2\lambda^2}{\eta\epsilon}$,
    and thus $\enVert{\W{t_1}-\barW}_F^2\le2\lambda^2$.
    As a result, using $\|\barU\|_F\leq 1$ and the definition of $\barW$,
    \begin{align*}
        \sqrt{2}\lambda\ge\enVert{\W{t_1}-\barW}_F\ge\ip{\W{t_1}-\barW}{\barU} & =\ip{\W{t_1}-\W{0}}{\barU}-\ip{\barW-\W{0}}{\barU} \\
         & \ge\ip{\W{t_1}-\W{0}}{\barU}-\lambda.
    \end{align*}
    Moreover, due to \cref{eq:good_margin},
    \begin{align*}
        \ip{\W{t_1}-\W{0}}{\barU}=-\eta \sum_{\tau<t_1}^{}\ip{\hnR(\W{\tau})}{\barU} & =\eta \sum_{\tau<t_1}^{}\frac{1}{n}\sum_{i=1}^{n}-\ell'\del{y_if_i(\W{\tau})}y_i\ip{\nf_i(\W{\tau})}{\barU} \\
         & \ge\eta \sum_{\tau<t_1}^{}\hcQ(\W{\tau})\frac{3\gamma}{4}.
    \end{align*}
    As a result,
    \begin{align*}
        \eta \sum_{\tau<t_1}^{}\hcQ(\W{\tau})\le \frac{4(\sqrt{2}+1)\lambda}{3\gamma}\le \frac{4\lambda}{\gamma}.
    \end{align*}
    Furthermore, by the triangle inequality, for any $1\le s\le m$
    \begin{align}
        \enVert{\w{s}{t}-\w{s}{0}}_2 & \le\eta \sum_{\tau<t}^{}\enVert{\frac{1}{n}\sum_{i=1}^{n}\ell'\del{y_if_i(\W{\tau})}y_i\frac{\partial f_i}{\partial \w{s}{\tau}}}_2 \notag\\
                                       & \le\eta \sum_{\tau<t}^{}\frac{1}{n}\sum_{i=1}^{n}\envert{\ell'\del{y_if_i(\W{\tau})}}\cdot\enVert{\frac{\partial f_i}{\partial \w{s}{\tau}}}_2 \notag\\
         & \le\eta \sum_{\tau<t}^{}\hcQ(\W{\tau})\frac{1}{\sqrt{m}} \notag\\
         & \le\eta \sum_{\tau<t_1}^{}\hcQ(\W{\tau})\frac{1}{\sqrt{m}}\le\frac{4\lambda}{\gamma\sqrt{m}},\label{eq:many_triangle}
    \end{align}
    which contradicts the definition of $t_1$.
    Therefore $t_1\ge T$.

    Now we are ready to prove the claims of \Cref{fact:erm}.
    The bound on $\enVert{\w{s}{t}-\w{s}{0}}_2$ follow by repeating the
    steps in \cref{eq:many_triangle}.
    The risk guarantee follows from \Cref{fact:squared_dist}:
    \begin{align*}
        \frac{1}{T}\sum_{t<T}^{}\hcR(\W{t})\le \frac{\enVert{\W{0}-\barW}_F^2}{\eta T}+\frac{2}{T}\sum_{t<T}^{}\hcR^{(t)}\del[1]{\barW}\le \frac{\epsilon}{2}+\frac{\epsilon}{2}=\epsilon.
    \end{align*}
\end{proof}

\section{Omitted proofs from \Cref{sec:gen}}\label{app_sec:gen}

The proof of \Cref{fact:gen} is based on Rademacher complexity.
Given a sample $S=(z_1,\ldots,z_n)$ (where $z_i=(x_i,y_i)$) and a function class
$\cH$, the Rademacher complexity of $\cH$ on $S$ is defined as
\begin{align*}
    \Rad\del{\cH\circ S}:=\frac{1}{n}\mathbb{E}_{\epsilon\sim\{-1,+1\}^n}\sbr{\sup_{h\in\cH}\sum_{i=1}^{n}\epsilon_ih(z_i)}.
\end{align*}

We will use the following general result.
\begin{lemma}\citep[Theorem 26.5]{shai_shai_book}\label{fact:rc_gen}
    If $h(z)\in[a,b]$, then with probability $1-\delta$,
    \begin{align*}
        \sup_{h\in\cH}\del{\mathbb{E}_{z\sim\cD}\sbr{h(z)}-\frac{1}{n}\sum_{i=1}^{n}h(z_i)}\le2\Rad\del{\cH\circ S}+3(b-a)\sqrt{\frac{\ln(2/\delta)}{2n}}.
    \end{align*}
\end{lemma}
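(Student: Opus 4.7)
The plan is to prove this standard Rademacher generalization bound via the classical three-step recipe: bounded differences, symmetrization, and a second bounded-differences step to pass from expected to empirical Rademacher complexity.

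First I would define $\Phi(S) := \sup_{h\in\cH}\del{\mathbb{E}_{z\sim\cD}\sbr{h(z)} - \frac{1}{n}\sum_{i=1}^{n} h(z_i)}$. Because $h$ takes values in $[a,b]$, replacing any single $z_i$ with some $z_i'$ changes the empirical average by at most $(b-a)/n$, and hence changes $\Phi(S)$ by at most $(b-a)/n$ (using the elementary fact that $|\sup_h A_h - \sup_h B_h| \le \sup_h |A_h - B_h|$). McDiarmid's bounded-differences inequality then yields, with probability at least $1-\delta/2$,
\begin{align*}
\Phi(S) \le \mathbb{E}\sbr{\Phi(S)} + (b-a)\sqrt{\frac{\ln(2/\delta)}{2n}}.
\end{align*}

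Second, I would bound $\mathbb{E}\sbr{\Phi(S)}$ by symmetrization. Introducing an i.i.d.\ ghost sample $S' = (z_1',\ldots,z_n')$ and writing $\mathbb{E}_{z\sim\cD}\sbr{h(z)} = \mathbb{E}_{S'}\sbr{\frac{1}{n}\sum_i h(z_i')}$, the standard inequality $\sup_h \mathbb{E}_{S'}[\cdot] \le \mathbb{E}_{S'}\sup_h[\cdot]$ gives
\begin{align*}
\mathbb{E}\sbr{\Phi(S)} \le \mathbb{E}_{S,S'}\sup_{h\in\cH}\frac{1}{n}\sum_{i=1}^{n}\del{h(z_i') - h(z_i)}.
\end{align*}
Since $(z_i,z_i')$ is exchangeable, the joint distribution is unchanged upon inserting independent Rademacher signs $\epsilon_i$ in front of each difference; splitting into the $\epsilon_i$ and $-\epsilon_i$ parts and using subadditivity of $\sup$ yields $\mathbb{E}\sbr{\Phi(S)} \le 2\,\mathbb{E}\sbr{\Rad(\cH\circ S)}$.

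Third, I would pass from the expected to the empirical Rademacher complexity. The map $S \mapsto \Rad(\cH\circ S)$ also has bounded differences $(b-a)/n$ (swapping one $z_i$ shifts $\sup_h \sum_i \epsilon_i h(z_i)$ by at most $b-a$ by the same sup argument, then one divides by $n$), so a second application of McDiarmid gives, with probability at least $1-\delta/2$,
\begin{align*}
\mathbb{E}\sbr{\Rad(\cH\circ S)} \le \Rad(\cH\circ S) + (b-a)\sqrt{\frac{\ln(2/\delta)}{2n}}.
\end{align*}
A union bound merges the two McDiarmid events, and assembling the three inequalities produces the stated bound with a factor of $3$ on the deviation term ($1$ from the first McDiarmid, $2$ from doubling the second one through symmetrization). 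There is no real obstacle, as this is a standard textbook derivation; the only step requiring genuine care is the symmetrization, where one must justify Rademacher insertion via the exchangeability of $z_i$ and $z_i'$.
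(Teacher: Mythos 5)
Your proof is correct, and the constants work out exactly as stated ($1$ deviation term from McDiarmid applied to the supremum of the deviation, plus $2$ from doubling the second McDiarmid applied to the empirical Rademacher complexity, each at level $\delta/2$). The paper does not prove this lemma at all but imports it by citation from \citep[Theorem 26.5]{shai_shai_book}, and your McDiarmid--symmetrization--McDiarmid argument is precisely the standard textbook derivation underlying that citation, so there is nothing to compare beyond noting that your derivation faithfully reproduces the cited result.
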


We also need the following contraction lemma.
Consider a feature sample $X=(x_1,\ldots,x_n)$ and a function class $\cF$ on
$X$.
For each $1\le i\le n$, let $g_i:\mathbb{R}\to \mathbb{R}$ denote a
$K$-Lipschitz function.
Let $g\circ \cF$ denote the class of functions which map $x_i$ to
$g_i(f(x_i))$ for some $f\in\cF$.
\begin{lemma}\citep[Lemma 26.9]{shai_shai_book}\label{fact:contraction}
    $\Rad\del{g\circ\cF\circ X}\le K\Rad\del{\cF\circ X}$.
\end{lemma}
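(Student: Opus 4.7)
The plan is to prove the Ledoux–Talagrand contraction inequality by peeling off one Rademacher coordinate at a time, reducing each $g_i$ to the identity while losing only a factor of $K$ per coordinate. First, by replacing each $g_i$ with $g_i/K$, I may assume without loss of generality that $K=1$, since Rademacher complexity is homogeneous in its argument.

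The main step is a single-coordinate contraction: fix an index $i$ and condition on all $\epsilon_j$ with $j\ne i$. Writing $A(f):=\sum_{j\ne i}\epsilon_j g_j(f(x_j))$, I expand
\[
\bbE_{\epsilon_i}\sup_{f\in\cF}\sbr{A(f)+\epsilon_i g_i(f(x_i))}=\tfrac{1}{2}\sup_{f}\sbr{A(f)+g_i(f(x_i))}+\tfrac{1}{2}\sup_{f}\sbr{A(f)-g_i(f(x_i))},
\]
and similarly with $g_i$ replaced by the identity. The goal is to show the former is at most the latter. Let $f_1,f_2$ attain the two suprema on the left; the resulting quantity is $A(f_1)+A(f_2)+g_i(f_1(x_i))-g_i(f_2(x_i))$. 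Using $1$-Lipschitzness of $g_i$, I split on whether $f_1(x_i)\ge f_2(x_i)$ or $f_1(x_i)<f_2(x_i)$: in the first case bound $g_i(f_1(x_i))-g_i(f_2(x_i))\le f_1(x_i)-f_2(x_i)$ and pair $(f_1,f_2)$ with the identity expansion directly; in the second case bound it by $f_2(x_i)-f_1(x_i)$ and swap the roles of $f_1$ and $f_2$. Either way, the expression is dominated by $\sup_f[A(f)+f(x_i)]+\sup_f[A(f)-f(x_i)]$.

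Having replaced $g_i$ with the identity for one coordinate, I iterate: applying the same argument to each remaining index $i'\ne i$ (where now the "other" Rademacher sum is a mix of $g_{j}(f(x_j))$ for not-yet-processed $j$ and $\epsilon_j f(x_j)$ for processed ones) removes every $g_i$ in turn. After all $n$ coordinates have been processed, I arrive at $\bbE_{\epsilon}\sup_{f\in\cF}\sum_i\epsilon_i f(x_i)$, and reinstating the $K$ from the initial rescaling yields the desired $K\,\mathfrak{R}(\cF\circ X)$.

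The only subtle step is the single-coordinate contraction, specifically the case-split trick that lets one apply the Lipschitz inequality with the correct sign after choosing the symmetric pairing; everything else is bookkeeping. Since the $g_i$ are allowed to differ across indices, it is important that the argument only uses the Lipschitz property of the single function $g_i$ being eliminated and does not require any symmetry among the $g_j$'s.
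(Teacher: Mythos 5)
Your proof is correct, and since the paper does not prove this lemma at all---it is cited verbatim from Shalev-Shwartz and Ben-David---the relevant comparison is with the cited source, whose proof yours reproduces essentially exactly: rescale to $K=1$, then eliminate one $g_i$ at a time by conditioning on the other Rademacher signs and using the case-split on the sign of $f_1(x_i)-f_2(x_i)$ to apply the Lipschitz bound with the right orientation. The only cosmetic issues are that the suprema need not be attained (take $\epsilon$-near-maximizers $f_1,f_2$ and let $\epsilon\to0$) and that a factor $\tfrac{1}{2}$ was dropped in the displayed intermediate quantity; neither affects the argument.
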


To prove \Cref{fact:gen}, we need one more Rademacher complexity bound.
Given a fixed initialization $(\W{0},a)$, consider the following classes:
\begin{align*}
    \cW_{\rho}:=\setdef{W\in \mathbb{R}^{m\times d}}{\enVert{w_{s}-\w{s}{0}}_2\le\rho\textrm{ for any }1\le s\le m},
\end{align*}
and
\begin{align*}
    \quad\cF_{\rho}:=\setdef{x\mapsto f(x;W,a)}{W\in\cW_{\rho}}.
\end{align*}
Given a feature sample $X$, the following \Cref{fact:rc} controls the Rademacher
complexity of $\cF_{\rho}\circ X$.
A similar version was given in \citep[Theorem 43]{percy_note}, and the proof is
similar to the proof of \citep[Theorem 18]{bartlett_mendelson_rademacher} which
also pushes the supremum through and handles each hidden unit separately.
\begin{lemma}\label{fact:rc}
    $\Rad\del{\cF_{\rho}\circ X}\le\rho\sqrt{m/n}$.
\end{lemma}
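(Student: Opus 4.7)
The plan is to carry out the standard ``unit-by-unit'' Rademacher analysis of shallow networks, adapted to the perturbation set $\cW_\rho$. Concretely, starting from
\begin{align*}
    \Rad\del{\cF_\rho\circ X} = \frac{1}{n\sqrt{m}}\,\mathbb{E}_\epsilon\sup_{W\in\cW_\rho}\sum_{i=1}^n\epsilon_i\sum_{s=1}^m a_s\,\sigma\del{\ip{w_s}{x_i}},
\end{align*}
the first move is to swap the two summations and push the supremum past the sum over hidden units $s$; this is valid because the constraint $\|w_s - w_{s,0}\|_2 \leq \rho$ decouples across $s$. Next, since $a_s \in \{-1,+1\}$ is fixed and $\epsilon_i$ is symmetric, the random variable $a_s \epsilon_i$ has the same distribution as $\epsilon_i$, so the factor $a_s$ can be dropped from the inner expectation.

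The resulting per-unit quantity is $\mathbb{E}_\epsilon \sup_{\|u\|_2 \leq \rho} \sum_i \epsilon_i\, \sigma\del{\ip{w_{s,0}}{x_i} + \ip{u}{x_i}}$, where I have written $u := w_s - w_{s,0}$. I would now invoke \Cref{fact:contraction} with $g_i(z) := \sigma\del{\ip{w_{s,0}}{x_i} + z}$; each $g_i$ is $1$-Lipschitz because ReLU is, and the underlying class is the linear class $\cF_s := \setdef{x \mapsto \ip{u}{x}}{\|u\|_2 \leq \rho}$. The contraction lemma as stated allows the per-index shift $\ip{w_{s,0}}{x_i}$ without any extra cost, so
\begin{align*}
    \mathbb{E}_\epsilon \sup_{\|u\|_2\le\rho}\sum_{i}\epsilon_i\, g_i\del{\ip{u}{x_i}} \le \mathbb{E}_\epsilon\sup_{\|u\|_2\le\rho}\sum_{i}\epsilon_i\,\ip{u}{x_i}.
\end{align*}

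Finally, the linear-class term is handled by Cauchy--Schwarz followed by Jensen: $\mathbb{E}_\epsilon \sup_{\|u\|_2\le\rho} \ip{u}{\sum_i \epsilon_i x_i} \le \rho\,\mathbb{E}_\epsilon \|\sum_i \epsilon_i x_i\|_2 \le \rho\sqrt{\sum_i \|x_i\|_2^2} \le \rho\sqrt{n}$, using $\|x_i\|_2 = 1$. Plugging this bound back yields $\rho\sqrt{n}$ for each of the $m$ hidden units, and multiplying by the prefactor $1/(n\sqrt{m})$ gives $\rho\sqrt{m/n}$, as desired.

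There is no real obstacle here, since everything decouples cleanly across hidden units and ReLU is well-behaved with respect to the contraction lemma; the only subtlety worth stressing in the write-up is that the per-example shift $\ip{w_{s,0}}{x_i}$ is absorbed into the $g_i$'s, which is exactly why \Cref{fact:contraction} is stated with index-dependent Lipschitz functions. The $a_s$ absorption via symmetrization and the decoupling of the constraint set $\cW_\rho$ across units are the two small structural observations that make the bound tight in $m$ (rather than producing a $\sqrt{m}\cdot\sqrt{m} = m$ loss).
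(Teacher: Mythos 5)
Your proposal is correct and follows essentially the same route as the paper: push the supremum through the sum over hidden units (using that $\cW_\rho$ is a product set), apply the contraction lemma per unit, and finish with the Rademacher bound for a linear class of radius $\rho$ via Cauchy--Schwarz and Jensen. The only cosmetic differences are that you drop $a_s$ by symmetry of $\epsilon$ and absorb the offset $\ip{w_{s,0}}{x_i}$ into the index-dependent Lipschitz functions, whereas the paper folds $a_s$ into the $1$-Lipschitz map $z\mapsto a_s\sigma(z)$ and cites the linear-classifier lemma directly.
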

\begin{proof}[Proof of \Cref{fact:rc}]
    We have
    \begin{align*}
        \mathbb{E}_{\epsilon}\sbr{\sup_{W\in\cW_{\rho}}\sum_{i=1}^{n}\epsilon_if(x_i;W,a)} & =\mathbb{E}_{\epsilon}\sbr{\sup_{W\in\cW_{\rho}}\sum_{i=1}^{n}\epsilon_i \sum_{s=1}^{m}\frac{1}{\sqrt{m}}a_s\sigma\del{\ip{w_s}{x_i}}} \\
         & =\mathbb{E}_{\epsilon}\sbr{\frac{1}{\sqrt{m}}\sup_{W\in\cW_{\rho}}\sum_{s=1}^{m}\sum_{i=1}^{n}\epsilon_ia_s\sigma\del{\ip{w_s}{x_i}}} \\
         & =\mathbb{E}_{\epsilon}\sbr{\frac{1}{\sqrt{m}}\sum_{s=1}^{m}\del{\sup_{\enVert{w_s-\w{s}{0}}_2\le\rho}\sum_{i=1}^{n}\epsilon_ia_s\sigma\del{\ip{w_s}{x_i}}}} \\
         & =\frac{1}{\sqrt{m}}\sum_{i=1}^{m}\mathbb{E}_{\epsilon}\sbr{\sup_{\enVert{w_s-\w{s}{0}}_2\le\rho}\sum_{i=1}^{n}\epsilon_ia_s\sigma\del{\ip{w_s}{x_i}}}.
    \end{align*}
    Note that for any $1\le s\le m$, the mapping $z\mapsto a_s\sigma(z)$ is
    $1$-Lipschitz, and thus \Cref{fact:contraction} gives
    \begin{align*}
        \mathbb{E}_{\epsilon}\sbr{\sup_{W\in\cW_{\rho}}\sum_{i=1}^{n}\epsilon_if(x_i;W,a)} & \le \frac{1}{\sqrt{m}}\sum_{i=1}^{m}\mathbb{E}_{\epsilon}\sbr{\sup_{\enVert{w_s-\w{s}{0}}_2\le\rho}\sum_{i=1}^{n}\epsilon_ia_s\sigma\del{\ip{w_s}{x_i}}} \\
        & \le \frac{1}{\sqrt{m}}\sum_{i=1}^{m}\mathbb{E}_{\epsilon}\sbr{\sup_{\enVert{w_s-\w{s}{0}}_2\le\rho}\sum_{i=1}^{n}\epsilon_i\ip{w_s}{x_i}}.
    \end{align*}
    Invoking the Rademacher complexity of linear classifiers
    \citep[Lemma 26.10]{shai_shai_book} then gives
    \begin{align*}
        \mathrm{Rad}\del{\cF_{\rho}\circ X}=\frac{1}{n}\mathbb{E}_{\epsilon}\sbr{\sup_{W\in\cW_{\rho}}\sum_{i=1}^{n}\epsilon_if(x_i;W,a)}\le \frac{\rho\sqrt{m}}{\sqrt{n}}.
    \end{align*}
\end{proof}

Now we are ready to prove the main generalization result \Cref{fact:gen}.
\begin{proof}
    Fix an initialization $(\W{0},a)$, and let
    $\cH:=\setdef{(x,y)\mapsto-\ell'\del{yf(x)}}{f\in\cF_{\rho}}$.
    Since for any $h\in\cH$ and any $z$, $h(z)\in[0,1]$, \Cref{fact:rc_gen}
    ensures that with probability $1-\delta$ over the data sampling,
    \begin{align*}
        \sup_{h\in\cH}\del{\mathbb{E}_{z\sim\cD}\sbr{h(z)}-\frac{1}{n}\sum_{i=1}^{n}h(z_i)}=\sup_{W\in\cW_{\rho}}\del{\cQ(W)-\hcQ(W)}\le2\Rad\del{\cH\circ S}+3\sqrt{\frac{\ln(2/\delta)}{2n}}.
    \end{align*}
    Since for each $1\le i\le n$, the mapping $z\mapsto-\ell'(y_iz)$ is
    $(1/4)$-Lipschitz, \Cref{fact:contraction} further ensures that
    $\Rad\del{\cH\circ S}\le\Rad\del{\cF_{\rho}\circ X}/4$, and thus
    \begin{align}\label{eq:gen_tmp1}
        \sup_{W\in\cW_{\rho}}\del{\cQ(W)-\hcQ(W)}\le \frac{\rho\sqrt{m}}{2\sqrt{n}}+3\sqrt{\frac{\ln(2/\delta)}{2n}}.
    \end{align}

    On the other hand, \Cref{fact:erm} ensures that under the conditions of
    \Cref{fact:gen}, for any fixed dataset, with probability $1-3\delta$ over
    the random initialization, we have
    \begin{align*}
        \hcQ(\W{k})\le\hcR(\W{k})\le\epsilon,\quad\textrm{and}\quad \enVert{\w{s}{k}-\w{s}{0}}_2\le \frac{4\lambda}{\gamma\sqrt{m}}.
    \end{align*}
    As a result, invoking \cref{eq:gen_tmp1} with
    $\rho=4\lambda/(\gamma\sqrt{m})$, with probability $1-4\delta$ over
    the random initialization and data sampling,
    \begin{align*}
        \cQ(\W{k})\le\hcQ(\W{k})+\frac{2\lambda}{\gamma\sqrt{n}}+3\sqrt{\frac{\ln(2/\delta)}{2n}}\le\epsilon+\frac{8\del{\sqrt{2\ln(4n/\delta)}+\ln(4/\epsilon)}}{\gamma^2\sqrt{n}}+3\sqrt{\frac{\ln(2/\delta)}{2n}}.
    \end{align*}
    Invoking $P_{(x,y)\sim\cD}\del{yf(x;W,a)\le0}\le2\cQ(W)$ finishes the proof.
\end{proof}

\section{Omitted proofs from \Cref{sec:sgd}}\label{app_sec:sgd}

\begin{proof}[Proof of \Cref{fact:squared_dist_sgd}]
    Recall that $\enVert{\nf_t(\W{t})}_F\le1$, we have
    \begin{align}\label{eq:step_sgd}
        \enVert{\W{t+1}-\barW}_F^2 & \le\enVert{\W{t}-\barW}_F^2-2\eta\ell'\del{y_tf_t(\W{t})}y_t\ip{\nf_t(\W{t})}{\W{t}-\barW}+\eta^2\del{\ell'\del{y_tf_t(\W{t})}}^2.
    \end{align}
    Similar to the proof of \Cref{fact:squared_dist}, the first order term of
    \cref{eq:step_sgd} can be handled using the convexity of $\ell$ and
    homogeneity of ReLU as follows
    \begin{align}\label{eq:1st_order_sgd}
        \ell'\del{y_tf_t(\W{t})}y_t\ip{\nf_t(\W{t})}{\W{t}-\barW}\ge\cR_t(\W{t})-\cR_t\del[1]{\barW},
    \end{align}
    and the second-order term of \cref{eq:step_sgd} can be bounded as follows
    \begin{align}\label{eq:2nd_order_sgd}
        \eta^2\del{\ell'\del{y_tf_t(\W{t})}}^2\le-\eta\ell'\del{y_tf_t(\W{t})}\le\eta\ell\del{y_tf_t(\W{t})}=\eta\cR_t(\W{t}),
    \end{align}
    since $\eta,-\ell'\le1$ and $-\ell'\le\ell$.
    Combining \cref{eq:step_sgd,eq:1st_order_sgd,eq:2nd_order_sgd} gives
    \begin{align*}
        \eta\cR_t(\W{t})\le\enVert{\W{t}-\barW}_F^2-\enVert{\W{t+1}-\barW}_F^2+2\eta\cR_t\del[1]{\barW}.
    \end{align*}
    Telescoping gives the claim.
\end{proof}

With \Cref{fact:squared_dist_sgd}, we give the following result, which is an
extension of \Cref{fact:erm} to the SGD setting.
\begin{lemma}\label{fact:erm_sgd}
    Under \Cref{cond:kernel_sep_gen}, given any $\epsilon\in(0,1)$, any
    $\delta\in(0,1/3)$, and any positive integer $n_0$, let
    \begin{align*}
        \lambda:=\frac{\sqrt{2\ln(4n_0/\delta)}+\ln(4/\epsilon)}{\gamma/4},\quad\textrm{and}\quad M:=\frac{4096\lambda^2}{\gamma^6}.
    \end{align*}
    For any $m\ge M$ and any constant step size $\eta\le1$, if
    $n_0\ge n:=\lceil\nicefrac{2\lambda^2}{\eta\epsilon}\rceil$, then with
    probability $1-3\delta$,
    \begin{align*}
        \frac{1}{n}\sum_{i<n}^{}\cQ_i(\W{i})\le\epsilon.
    \end{align*}
\end{lemma}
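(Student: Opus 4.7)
The plan is to mirror the proof of \Cref{fact:erm} almost line for line, with \Cref{fact:squared_dist_sgd} playing the role of \Cref{fact:squared_dist}. The auxiliary parameter $n_0$ in the statement exists precisely so that the initialization lemmas (\Cref{fact:ntk_to_ek,fact:stable_act,fact:init_size}), which are formulated for a deterministic training set of size $n_0$, can be applied up front to the full sample of size $n_0$ that SGD might encounter.

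Concretely, I would first condition on the sample $\{(x_i,y_i)\}_{i=0}^{n_0-1}$: since \Cref{cond:kernel_sep_gen} holds for $\cD$-almost every $(x,y)$, with probability one $\barv$ satisfies \Cref{cond:kernel_sep} on this finite set with the same margin $\gamma$. Then, with $\epsilon_1:=\gamma^2/8$ and $\epsilon_2:=4\lambda/(\gamma\sqrt{m})$, the choice $m\ge M$ makes a union bound over \Cref{fact:ntk_to_ek,fact:stable_act,fact:init_size} hold uniformly for all $n_0$ samples with probability $1-3\delta$ over the random initialization. Defining $\barU$ via \cref{eq:baru}, $\barW:=\W{0}+\lambda\barU$, and $t_1$ as the first step at which some $\|\w{s}{t_1}-\w{s}{0}\|_2$ exceeds $4\lambda/(\gamma\sqrt{m})$, the same termwise control of
\[
y_t\ip{\nf_t(\W{t})}{\barW}=y_t\ip{\nf_t(\W{0})}{\W{0}}+y_t\ip{\nf_t(\W{t})-\nf_t(\W{0})}{\W{0}}+\lambda y_t\ip{\nf_t(\W{t})}{\barU}
\]
used in \Cref{fact:erm} carries through for every $t<t_1$, yielding $\cR_t(\barW)\le\epsilon/4$ and $y_t\ip{\nf_t(\W{t})}{\barU}\ge 3\gamma/4$; nothing in that termwise reasoning required averaging over $i$, so applying it to the single example $(x_t,y_t)$ is legitimate.

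The contradiction step is then the SGD analog of the one in \Cref{fact:erm}: if $t_1<n$, \Cref{fact:squared_dist_sgd} gives $\|\W{t_1}-\barW\|_F^2\le\lambda^2+2\eta t_1(\epsilon/4)\le 2\lambda^2$; projecting $\W{t_1}-\barW$ onto $\barU$ and plugging in the margin lower bound yields $\eta\sum_{t<t_1}\cQ_t(\W{t})\le 4\lambda/\gamma$, and the triangle-inequality chain mimicking \cref{eq:many_triangle} bounds $\|\w{s}{t_1}-\w{s}{0}\|_2$ by $4\lambda/(\gamma\sqrt{m})$, contradicting the choice of $t_1$. Hence $t_1\ge n$, and one more invocation of \Cref{fact:squared_dist_sgd} at $i=n$ gives $\tfrac{1}{n}\sum_{t<n}\cR_t(\W{t})\le\lambda^2/(\eta n)+\epsilon/2\le\epsilon$, from which the stated bound on $\tfrac{1}{n}\sum_{t<n}\cQ_t(\W{t})$ follows by the pointwise inequality $\cQ_t\le\cR_t$. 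There is no substantial new technical obstacle; the only point requiring care is the coupling of the two independent randomness sources, which is why the statement introduces $n_0$ explicitly and fixes $\lambda$ and $M$ from $n_0$ rather than from the dependent quantity $n$.
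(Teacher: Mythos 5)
Your proposal is correct and follows essentially the same route as the paper: sample the $n_0$ examples up front (so the finite-sample initialization lemmas apply with the almost-sure margin from \Cref{cond:kernel_sep_gen}), rerun the \Cref{fact:erm} argument per example with \Cref{fact:squared_dist_sgd} in place of \Cref{fact:squared_dist} to show the weights stay within $4\lambda/(\gamma\sqrt{m})$ for at least $n$ steps, and then invoke \Cref{fact:squared_dist_sgd} once more together with $\cQ_t\le\cR_t$. No gaps to flag.
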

\begin{proof}
    We first sample $n_0$ data examples
    $(x_0,y_0),\ldots,(x_{n_0-1},y_{n_0-1})$, and then feed $(x_i,y_i)$ to SGD
    at step $i$.
    We only consider the first $n_0$ steps.

    The proof is similar to the proof of \Cref{fact:erm}.
    Let $n_1$ denote the first step before $n_0$ such that there exists some
    $1\le s\le m$ with
    $\enVert{\w{s}{n_1}-\w{s}{0}}_2>4\lambda/(\gamma\sqrt{m})$.
    If such a step does not exist, let $n_1=n_0$.

    Let $\barW:=\W{0}+\lambda\barU$, in exactly the same way as in
    \Cref{fact:erm}, we can show that with probability $1-3\delta$, for any
    $0\le i<n_1$,
    \begin{align*}
        y_i\ip{\nf_i(\W{i})}{\barW}\ge\ln\del{\frac{4}{\epsilon}},\quad\textrm{and thus}\quad\cR_i\del[1]{\barW}\le\epsilon/4.
    \end{align*}

    Now consider $n:=\lceil\nicefrac{2\lambda^2}{\eta\epsilon}\rceil$.
    Using \Cref{fact:squared_dist_sgd}, in the same way as the proof of
    \Cref{fact:erm} (replacing $\hcQ(\W{\tau})$ with $\cQ_i(\W{i})$, etc.), we
    can show that $n\le n_1$.
    Then invoking \Cref{fact:squared_dist_sgd} again, we get
    \begin{align*}
        \frac{1}{n}\sum_{i<n}^{}\cQ_i(\W{i})\le \frac{1}{n}\sum_{i<n}^{}\cR_i(\W{i})\le \frac{\enVert[1]{\W{0}-\barW}_F^2}{\eta n}+\frac{2}{n}\sum_{i<n}^{}\cR_i\del[1]{\barW}\le \frac{\epsilon}{2}+\frac{\epsilon}{2}=\epsilon.
    \end{align*}
\end{proof}

Next we prove \Cref{fact:gen_sgd}.
We need the following martingale Bernstein bound.
\begin{lemma}\citep[Theorem 1]{BLLRS11}\label{fact:mart_bern}
    Let $(M_t,\mathcal{F}_t)_{t\ge0}$ denote a martingale with $M_0=0$ and
    $\mathcal{F}_0$ be the trivial $\sigma$-algebra.
    Let $(\Delta_t)_{t\ge1}$ denote the corresponding martingale difference
    sequence, and let
    \begin{equation*}
        V_t:=\sum_{j=1}^{t}\mathbb{E}\left[\Delta_j^2\middle|\mathcal{F}_{j-1}\right]
    \end{equation*}
    denote the sequence of conditional variance.
    If $\Delta_t\le R$ a.s., then for any $\delta\in(0,1)$, with probability at
    least $1-\delta$,
    \begin{equation*}
        M_t\le \frac{V_t}{R}(e-2)+R\ln\left(\frac{1}{\delta}\right).
    \end{equation*}
\end{lemma}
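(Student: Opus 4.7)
The plan is to follow the standard exponential-supermartingale approach, which hinges on the elementary inequality $e^x \le 1 + x + (e-2)x^2$ valid for all $x \le 1$. This sharp inequality follows by checking that $(e^x - 1 - x)/x^2$ is nondecreasing on $(-\infty, 1]$ and attains value $e-2$ at $x = 1$; the point of using this bound rather than the looser $e^x \le 1 + x + x^2$ is precisely to recover the constant $(e-2)$ appearing in the statement.

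The central object is the process
\[
    Z_t := \exp\!\del{\frac{M_t}{R} - \frac{(e-2)}{R^2} V_t}, \qquad Z_0 = 1.
\]
I will show $(Z_t)$ is a supermartingale with $\mathbb{E}[Z_t] \le 1$. To this end, observe that $\Delta_j/R \le 1$ a.s., so applying the elementary inequality conditionally on $\mathcal{F}_{j-1}$ and using the martingale-difference property $\mathbb{E}[\Delta_j \mid \mathcal{F}_{j-1}] = 0$ gives
\[
    \mathbb{E}\sbr[2]{ e^{\Delta_j/R} \,\big|\, \mathcal{F}_{j-1} }
    \le 1 + \frac{e-2}{R^2}\mathbb{E}\sbr[1]{\Delta_j^2 \,\big|\, \mathcal{F}_{j-1}}
    \le \exp\!\del{\frac{e-2}{R^2}\mathbb{E}\sbr[1]{\Delta_j^2 \,\big|\, \mathcal{F}_{j-1}}},
\]
using $1 + u \le e^u$. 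Multiplying by $Z_{j-1}/\exp((e-2)\mathbb{E}[\Delta_j^2\mid\mathcal{F}_{j-1}]/R^2)$ and recognizing the left side as $Z_j / \exp(\cdots)$ yields $\mathbb{E}[Z_j \mid \mathcal{F}_{j-1}] \le Z_{j-1}$, so by induction $\mathbb{E}[Z_t] \le Z_0 = 1$.

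The tail bound then follows from Markov's inequality: $\mathbb{P}(Z_t \ge 1/\delta) \le \delta$, so with probability at least $1 - \delta$,
\[
    \frac{M_t}{R} - \frac{(e-2)}{R^2} V_t \;\le\; \ln\!\del{\frac{1}{\delta}},
\]
and rearranging yields the stated inequality. The only subtlety is taking care that $V_t$ is $\mathcal{F}_{t-1}$-measurable (not $\mathcal{F}_t$-measurable), so that the factor involving $V_j$ can indeed be pulled out of the conditional expectation in the supermartingale step; this is immediate from its definition as a sum of predictable conditional variances. There is no real obstacle here: the entire argument is self-contained given the elementary inequality, which itself is a one-variable calculus exercise.
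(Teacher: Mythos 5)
Your proof is correct. Note that the paper does not prove this lemma at all---it is quoted directly from the cited reference \citep[Theorem 1]{BLLRS11}---and your argument is essentially the standard proof given there: the inequality $e^x\le 1+x+(e-2)x^2$ for $x\le 1$, the exponential supermartingale $Z_t=\exp\del{M_t/R-(e-2)V_t/R^2}$ (using that $V_t$ is predictable so it can be pulled out of the conditional expectation), $\mathbb{E}[Z_t]\le 1$, and Markov's inequality. So there is no gap and no genuinely different route; you have simply supplied the omitted proof of the cited result.
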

\begin{proof}[Proof of \Cref{fact:gen_sgd}]
    For any $i\ge0$, let $z_i$ denote $(x_i,y_i)$, and $z_{0,i}$ denote
    $(z_0,\ldots,z_i)$.
    Note that the quantity $\sum_{t<i}^{}\del{\cQ(\W{t})-\cQ_t(\W{t})}$ is a
    martingale w.r.t. the filtration $\sigma(z_{0,i-1})$.
    The martingale difference sequence is given by
    $\cQ(\W{t})-\cQ_t(\W{t})$, which satisfies
    \begin{align}\label{eq:sgd_tmp1}
        \cQ(\W{t})-\cQ_t(\W{t})=\bbE_{(x,y)\sim\cD}\sbr{-\ell'\del{yf(x;\W{t},a)}}+\ell'\del{y_tf(x_t;\W{t},a)}\le1,
    \end{align}
    since $-1\le\ell'\le0$.
    Moreover, we have
    \begin{align}
    \begin{split}\label{eq:sgd_tmp2}
         & \,\bbE\sbr{\del{\cQ(\W{t})-\cQ_t(\W{t})}^2\middle|\sigma(z_{0,t-1})} \\
        = & \,\cQ(\W{t})^2-2\cQ(\W{t})\bbE\sbr{\cQ_t(\W{t})\middle|\sigma(z_{0,t-1})}+\bbE\sbr{\cQ_t(\W{t})^2\middle|\sigma(z_{0,t-1})} \\
        = & \,-\cQ(\W{t})^2+\bbE\sbr{\cQ_t(\W{t})^2\middle|\sigma(z_{0,t-1})} \\
        \le & \,\bbE\sbr{\cQ_t(\W{t})^2\middle|\sigma(z_{0,t-1})} \\
        \le & \,\bbE\sbr{\cQ_t(\W{t})\middle|\sigma(z_{0,t-1})} \\
        = & \,\cQ(\W{t}).
    \end{split}
    \end{align}
    Invoking \Cref{fact:mart_bern} with \cref{eq:sgd_tmp1,eq:sgd_tmp2} gives
    that with probability $1-\delta$,
    \begin{align*}
        \sum_{t<i}^{}\del{\cQ(\W{t})-\cQ_t(\W{t})}\le(e-2)\sum_{t<i}^{}\cQ(\W{t})+\ln\del{\frac{1}{\delta}}.
    \end{align*}
    Consequently,
    \begin{align*}
        \sum_{t<i}^{}\cQ(\W{t})\le 4\sum_{t<i}^{}\cQ_t(\W{t})+4\ln\del{\frac{1}{\delta}}.
    \end{align*}
\end{proof}

Finally, we prove \Cref{fact:sgd}.
\begin{proof}[Proof of \Cref{fact:sgd}]
    Suppose the condition of \Cref{fact:erm_sgd} holds.
    Then we have for $n=\lceil\nicefrac{2\lambda^2}{\eta\epsilon}\rceil$, with
    probability $1-3\delta$,
    \begin{align*}
        \frac{1}{n}\sum_{i<n}^{}\cQ_i(\W{i})\le\epsilon.
    \end{align*}
    Further invoking \Cref{fact:gen_sgd} gives that with probability
    $1-4\delta$,
    \begin{align*}
        \frac{1}{n}\sum_{i<n}^{}\cQ(\W{i})\le \frac{4}{n}\sum_{i<n}^{}\cQ_i(\W{i})+\frac{4}{n}\ln\del{\frac{1}{\delta}}\le 5\epsilon.
    \end{align*}
    Since $P_{(x,y)\sim\cD}\del{yf(x;W,a)\le0}\le2\cQ(W)$, we get
    \begin{align*}
        \frac{1}{n}\sum_{i=1}^{n}P_{(x,y)\sim\cD}\del{yf(x;\W{i},a)\le0}\le 10\epsilon.
    \end{align*}

    For the condition of \Cref{fact:erm_sgd} to hold, it is enough to let
    \begin{align*}
        n_0=\Theta\del{\frac{\ln(1/\delta)}{\eta\gamma^2\epsilon^2}},
    \end{align*}
    which gives
    \begin{align*}
        M=\Theta\del{\frac{\ln(1/\delta)+\ln(1/\epsilon)^2}{\gamma^8}}\quad\textrm{and}\quad n=\Theta\del{\frac{\ln(1/\delta)+\ln(1/\epsilon)^2}{\gamma^2\epsilon}}.
    \end{align*}
\end{proof}

\section{Omitted proofs from \Cref{sec:sep}}\label{app_sec:sep}

\begin{proof}[Proof of \Cref{fact:barv_kernel}]
    Define $f:\mathcal{H}\to \mathbb{R}$ by
    \begin{align*}
        f(w):=\frac{1}{2}\int\|w(z)\|_2^2\dif\mu_{\mathcal{N}}(z)=\frac{1}{2}\|w\|_{\mathcal{H}}^2.
    \end{align*}
    It holds that $f$ is continuous, and $f^*$ has the same form.
    Define $g:\mathbb{R}^n\to \mathbb{R}$ by
    \begin{align*}
        g(p):=\max_{1\le i\le n}p_i,
    \end{align*}
    with conjugate
    \begin{equation*}
        g^*(q)=
        \begin{dcases}
            0, & \textrm{if }q\in\Delta_n, \\
            +\infty, & \textrm{o.w.}
        \end{dcases}
    \end{equation*}
    Finally, define the linear mapping $A:\mathcal{H}\to \mathbb{R}^n$ by
    $(Aw)_i=y_i\ip{w}{\phi_i}_\cH$.

    Since $f$, $f^*$, $g$ and $g^*$ are lower semi-continuous, and
    $\mathbf{dom}\,g-A \mathbf{dom}\,f=\mathbb{R}^n$, and
    $\mathbf{dom}\,f^*-A^*\mathbf{dom}\,g^*=\mathcal{H}$,
    Fenchel duality may be applied in each direction \citep[Theorem 4.4.3]{borwein_var},
    and ensures that
    \begin{align*}
        \inf_{w\in \mathcal{H}}\del{f(w)+g(Aw)}=\sup_{q\in \mathbb{R}^n}\del{-f^*(A^*q)-g^*(-q)}.
    \end{align*}
    with optimal primal-dual solutions $(\barw,\barq)$.
    Moreover
    \begin{align*}
        \inf_{w\in \mathcal{H}}\left(f(w)+g(Aw)\right) & =\inf_{w\in \mathcal{H},u\in \mathbb{R}^n}\sup_{q\in \mathbb{R}^n}\left(f(w)+g(Aw+u)+\langle q,u\rangle\right) \\
         & \ge\sup_{q\in \mathbb{R}^n}\inf_{w\in \mathcal{H},u\in \mathbb{R}^n}\left(f(w)+g(Aw+u)+\langle q,u\rangle\right) \\
         & =\sup_{q\in \mathbb{R}^n}\inf_{w\in \mathcal{H},u\in \mathbb{R}^n}\del{\del{f(w)-\ip{A^*q}{w}}_\cH+\del{g(Aw+u)-\ip{-q}{Aw+u}}} \\
         & =\sup_{q\in \mathbb{R}^n}\del{-f^*(A^*q)-g^*(-q)}.
    \end{align*}
    By strong duality, the inequality holds with equality.
    It follows that
    \begin{align*}
        \barw=A^*\barq,\quad\textrm{and}\quad \mathbf{supp}(-\barq)\subset\argmax_{1\le i\le n}\,(A\barw)_i.
    \end{align*}

    Now let us look at the dual optimization problem.
    It is clear that
    \begin{align*}
        \sup_{q\in \mathbb{R}^n}\del{-f^*(A^*q)-g^*(-q)}=-\inf_{q\in\Delta_n}f^*(A^*q).
    \end{align*}
    In addition, we have
    \begin{align*}
        f^*(A^*q) & =\frac{1}{2}\int\enVert{\sum_{i=1}^{n}q_iy_i\phi_i(z)}_2^2\dif\mu_{\mathcal{N}}(z) \\
         & =\frac{1}{2}\int \sum_{i,j=1}^{n}q_iq_jy_iy_j\ip{\phi_i(z)}{\phi_j(z)}\dif\mu_{\mathcal{N}}(z) \\
         & =\frac{1}{2}\sum_{i,j=1}^{n}q_iq_jy_iy_j\int\ip{\phi_i(z)}{\phi_j(z)}\dif\mu_{\mathcal{N}}(z) \\
         &=\frac{1}{2}\sum_{i,j=1}^{n}q_iq_jy_iy_jK_1(i,j)=\frac{1}{2}(q\odot y)^\top K_1(q\odot y),
    \end{align*}
    and thus $f^*(A^*\barq)=\gamma_1^2/2$.
    Since $\barw=A^*\barq$, we have that
    $\enVert{\barw}_{\mathcal{H}}=\gamma_1$.
    In addition,
    \begin{align*}
        g(A\barw)=-f^*\del{A^*\barq}-f\del{\barw}=-\gamma_1^2,
    \end{align*}
    and thus $-\barw$ has margin $\gamma_1^2$.
    Moreover, we have
    \begin{align*}
        \barw(z)=\sum_{i=1}^{n}\barq_iy_i\phi_i(z)=\sum_{i=1}^{n}\barq_iy_ix_i\1\sbr{\ip{z}{x_i}>0},
    \end{align*}
    and thus $\enVert{\barw(z)}_2\le1$.
    Therefore, $\hatv=-\barw/\gamma_1$ satisfies all requirements of
    \Cref{fact:barv_kernel}.
\end{proof}

\begin{proof}[Proof of \Cref{fact:random_label}]
    Let $\hatq$ denote the uniform probability vector
    $(\nicefrac{1}{n},\ldots,\nicefrac{1}{n})$.
    Note that
    \begin{align*}
        \bbE_{\epsilon\sim \mathrm{unif}\del{\{-1,+1\}^n}}\sbr{\del{\hatq\odot\epsilon}^\top K_1\del{\hatq\odot\epsilon}} & =\bbE_{\epsilon\sim \mathrm{unif}\del{\{-1,+1\}^n}}\sbr{\sum_{i,j=1}^{n}\frac{1}{n^2}\epsilon_i\epsilon_jK_1(x_i,x_j)} \\
         & =\frac{1}{n^2}\sum_{i,j=1}^{n}\bbE_{\epsilon\sim \mathrm{unif}\del{\{-1,+1\}^n}}\sbr{\epsilon_i\epsilon_jK_1(x_i,x_j)} \\
         & =\frac{1}{n^2}\sum_{i=1}^{n}K_1(x_i,x_i)=\frac{1}{2n}.
    \end{align*}
    Since $0\le\del{\hatq\odot\epsilon}^\top K_1\del{\hatq\odot\epsilon}\le1$
    for any $\epsilon$, by Markov's inequality with probability $0.9$, it holds
    that $\del{\hatq\odot\epsilon}^\top K_1\del{\hatq\odot\epsilon}\le1/(20n)$,
    and thus $\gamma_1\le1/\sqrt{20n}$.
\end{proof}

\begin{proof}[Proof of \Cref{fact:2_xor_gamma}]
    By symmetry, we only need to consider an $(x,y)$ where
    $(x_1,x_2,y)=(\nicefrac{1}{\sqrt{d-1}},0,1)$.
    Let $z_{p,q}$ denote $(z_p,z_{p+1},\ldots,z_q)$, and similarly define
    $x_{p,q}$.
    We have
    \begin{align}
         & \ y\int\ip{\barv(z)}{x}\1\sbr{\ip{z}{x}>0}\dif\mu_{\cN}(z) \nonumber \\
        = & \ y\int\del{\int\ip{\barv(z)}{x}\1\sbr{\ip{z}{x}>0}\dif\mu_{\cN}(z_{3,d})}\dif\mu_{\cN}(z_{1,2}) \label{eq:2_xor_tmp1} \\
        = & \ y\int\ip{\barv(z)_{1,2}}{x_{1,2}}\del{\int\1\sbr{\ip{z_{1,2}}{x_{1,2}}+\ip{z_{3,d}}{x_{3,d}}>0}\dif\mu_{\cN}(z_{3,d})}\dif\mu_{\cN}(z_{1,2}) \label{eq:2_xor_tmp2} \\
        = & \ \sum_{i=1}^{4}y\int\ip{\barv(z)_{1,2}}{x_{1,2}}\del{\int\1\sbr{\ip{z_{1,2}}{x_{1,2}}+\ip{z_{3,d}}{x_{3,d}}>0}\dif\mu_{\cN}(z_{3,d})}\1\sbr{z_{1,2}\in A_i}\dif\mu_{\cN}(z_{1,2}), \label{eq:four_regions}
    \end{align}
    where \cref{eq:2_xor_tmp1} is due to the independence between $z_{1,2}$ and
    $z_{3,d}$, and in \cref{eq:2_xor_tmp2} we use the fact that $\barv(z)_{1,2}$
    only depends on $z_{1,2}$ and $\barv(z)_{3,d}$ are all zero.
    Since $\ip{\barv(z)_{1,2}}{x_{1,2}}=0$ for $z_{1,2}\in A_2\cup A_4$, we only
    need to consider $A_1$ and $A_3$ in \cref{eq:four_regions}.
    For simplicity, we will denote $z_{1,2}$ by $p\in\R^2$, and $\barv(z)_{1,2}$
    by $\barv(p)$, and $z_{3,d}$ by $q\in\R^{d-2}$.

    For any nonzero $p\in A_1$, we have $-p\in A_3$, and
    $\ip{\barv(p)}{x_{1,2}}=1/\sqrt{d-1}$.
    Therefore
    \begin{align}
         & \ y\ip{\barv(p)}{x_{1,2}}\del{\int\1\sbr{\ip{p}{x_{1,2}}+\ip{q}{x_{3,d}}>0}\dif\mu_{\cN}(q)} \nonumber \\
         & \ +y\ip{\barv(-p)}{x_{1,2}}\del{\int\1\sbr{\ip{-p}{x_{1,2}}+\ip{q}{x_{3,d}}>0}\dif\mu_{\cN}(q)} \nonumber \\
        = & \ \frac{1}{\sqrt{d-1}}\int\del{\1\sbr{\frac{p_1}{\sqrt{d-1}}+\ip{q}{x_{3,d}}>0}-\1\sbr{\frac{-p_1}{\sqrt{d-1}}+\ip{q}{x_{3,d}}>0}}\dif\mu_{\cN}(q) \nonumber \\
        = & \ \frac{1}{\sqrt{d-1}}\bbP\del{\frac{-p_1}{\sqrt{d-1}}\le\ip{q}{x_{3,d}}\le \frac{p_1}{\sqrt{d-1}}}. \label{eq:2_xor_tmp3}
    \end{align}
    Let $\varphi$ denote the density function of the standard Gaussian
    distribution, and for $c>0$, let $U(c)$ denote the probability that a
    standard Gaussian random variable lies in the interval $[-c,c]$:
    \begin{align*}
        U(c):=\int_{-c}^c\varphi(t)\dif t.
    \end{align*}
    Since $\ip{q}{x_{3,d}}$ is a Gaussian variable with standard deviation
    $\sqrt{\nicefrac{(d-2)}{(d-1)}}$, we have
    \begin{align}\label{eq:2_xor_tmp4}
        \bbP\del{\frac{-p_1}{\sqrt{d-1}}\le\ip{q}{x_{3,d}}\le \frac{p_1}{\sqrt{d-1}}}=U\del{\frac{p_1}{\sqrt{d-2}}}.
    \end{align}
    Plugging \cref{eq:2_xor_tmp3,eq:2_xor_tmp4} into \cref{eq:four_regions}
    gives:
    \begin{align*}
        y\int\ip{\barv(z)}{x}\1\sbr{\ip{z}{x}>0}\dif\mu_{\cN}(z) & =\frac{1}{\sqrt{d-1}}\int U\del{\frac{p_1}{\sqrt{d-2}}}\1\sbr{p\in A_1}\dif\mu_{\cN}(p) \\
         & =\frac{1}{\sqrt{d-1}}\int_0^\infty U\del{\frac{p_1}{\sqrt{d-2}}}\del{\int_{-p_1}^{p_1}\varphi(p_2)\dif p_2}\varphi(p_1)\dif p_1 \\
         & =\frac{1}{\sqrt{d-1}}\int_0^\infty U\del{\frac{p_1}{\sqrt{d-2}}}U(p_1)\varphi(p_1)\dif p_1 \\
         & \ge \frac{1}{\sqrt{d-1}}\int_0^1U\del{\frac{p_1}{\sqrt{d-2}}}U(p_1)\varphi(p_1)\dif p_1.
    \end{align*}
    For $t\in[-1,+1]$, it holds that $\varphi(t)\ge1\sqrt{2\pi e}$, and thus
    \begin{align*}
        U(a)=\int_{-a}^a\varphi(t)\dif t\ge \frac{2a}{\sqrt{2\pi e}}.
    \end{align*}
    Therefore \cref{eq:four_regions} is lower bounded by
    \begin{align*}
        \frac{1}{\sqrt{d-1}}\int_0^1U\del{\frac{p_1}{\sqrt{d-2}}}U(p_1)\varphi(p_1)\dif p_1 & \ge \frac{1}{\sqrt{d-1}}\int_0^1 \frac{2}{\sqrt{2\pi e}}\cdot \frac{p_1}{\sqrt{d-2}}\cdot \frac{2p_1}{\sqrt{2\pi e}}\cdot\frac{1}{\sqrt{2\pi e}}\dif p_1 \\
         & \ge\frac{1}{20\sqrt{(d-1)(d-2)}}\int_0^1p_1^2\dif p_1 \\
         & =\frac{1}{60\sqrt{(d-1)(d-2)}} \\
         & \ge \frac{1}{60d}.
    \end{align*}
\end{proof}

To prove \Cref{fact:ntk_lb}, we need the following technical lemma.
\begin{lemma}\label{fact:normal_prop}
    Given $z_1\sim\cN(0,1)$ and $z_2\sim\cN(0,b^2)$ that are independent where
    $b>1$, we have
    \begin{align*}
        \bbP\del{|z_1|<|z_2|}>1-\frac{1}{b}.
    \end{align*}
\end{lemma}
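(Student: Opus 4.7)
The plan is to rewrite the probability as the CDF of a Cauchy random variable and then verify the resulting inequality with elementary calculus. First, I would write $z_2 = b w$ where $w \sim \cN(0,1)$ is independent of $z_1$, so that
\begin{align*}
\bbP\del{|z_1| < |z_2|} = \bbP\del{|z_1/w| < b}.
\end{align*}
The ratio $z_1/w$ of two independent standard Gaussians follows the standard Cauchy distribution, and hence the probability in question has the closed form $(2/\pi)\arctan(b)$.

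Next, I would reduce the claim $(2/\pi)\arctan(b) > 1 - 1/b$ using the identity $\arctan(b) + \arctan(1/b) = \pi/2$ for $b > 0$, which gives $(2/\pi)\arctan(b) = 1 - (2/\pi)\arctan(1/b)$. The target then becomes the simpler statement $\arctan(1/b) < \pi/(2b)$.

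Finally, I would verify this via two elementary facts: the bound $\arctan(x) < x$ for all $x > 0$ (immediate from $\arctan(0) = 0$ together with $\arctan'(x) = 1/(1+x^2) < 1$ on $(0,\infty)$), and $1 < \pi/2$. Applying the former with $x = 1/b$ and chaining gives $\arctan(1/b) < 1/b < \pi/(2b)$, as required.

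No substantive obstacle is anticipated; the lemma is essentially an exercise. The only conceptual step is recognizing the Cauchy representation, which sidesteps an otherwise awkward double Gaussian integral. If one wanted to avoid invoking the Cauchy distribution, an equivalent route is to condition on $z_2$ and directly compute $\bbP(|z_1| \geq |z_2|) = \int \mathrm{erfc}(|z_2|/\sqrt{2})\, \varphi(z_2/b)/b\, \mathrm{d}z_2$, but the Cauchy computation is considerably cleaner.
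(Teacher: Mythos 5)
Your proof is correct, and it takes a somewhat different route from the paper's. You identify $z_1/w$ as a standard Cauchy variable, obtain the exact value $\bbP\del{|z_1|<|z_2|}=\tfrac{2}{\pi}\arctan(b)=1-\tfrac{2}{\pi}\arctan(1/b)$, and finish with the elementary bound $\arctan(x)<x$ for $x>0$ together with $2/\pi<1$. The paper instead works with the complementary event after the same rescaling $z_2=b z_3$: it writes $\bbP\del{|z_3|<|z_1|/b}$ as an integral of the quantity $U(|z_1|/b)$ (the probability that a standard Gaussian lies in $[-c,c]$) against the Gaussian density, and bounds it by $U(c)\le 2c/\sqrt{2\pi}$ and $\bbE|z_1|=\sqrt{2/\pi}$, arriving at $2/(\pi b)<1/b$. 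The two arguments are quantitatively equivalent --- your exact Cauchy computation, after $\arctan(1/b)\le 1/b$, yields the same $2/(\pi b)$ bound on the bad event --- but yours is exact-then-bound and arguably cleaner, while the paper's stays entirely within the Gaussian-density estimates (the same $U(\cdot)$ bound it reuses in the 2-XOR margin proof) and never needs to invoke the distribution of a ratio of Gaussians. Either argument suffices for the application in \Cref{fact:ntk_lb}.
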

\begin{proof}
    First note that for $z_3\sim\cN(0,1)$ which is independent of $z_1$,
    \begin{align*}
        \bbP\del{|z_1|<|z_2|}=\bbP\del{|z_1|<b|z_3|}=1-\bbP\del{|z_3|<\frac{1}{b}|z_1|}.
    \end{align*}
    Still let $\varphi$ denote the density of $\cN(0,1)$, and let $U(c)$ denote
    the probability that $z_3\in[-c,c]$.
    We have
    \begin{align*}
        \bbP\del{|z_3|<\frac{1}{b}|z_1|} & =\int\int\1\sbr{|z_3|<\frac{1}{b}|z_1|}\varphi(z_3)\varphi(z_1)\dif z_3\dif z_1 \\
         & =\int U\del{\frac{1}{b}|z_1|}\varphi(z_1)\dif z_1 \\
         & \le \frac{2}{\sqrt{2\pi}b}\int|z_1|\varphi(z_1)\dif z_1= \frac{2}{\pi b}<\frac{1}{b},
    \end{align*}
    where we use the facts that $U(c)\le2c/\sqrt{2\pi}$ and
    $\bbE[|z_1|]=\sqrt{2/\pi}$.
\end{proof}

We now give the proof of \Cref{fact:ntk_lb} using \Cref{fact:normal_prop}.
\begin{proof}[Proof of \Cref{fact:ntk_lb}]
    By symmetry, we only need to consider the following training set:
    \begin{align*}
        x_1=(1,0,1,\ldots,1), & \quad y_1=1, \\
        x_2=(0,1,1,\ldots,1), & \quad y_2=-1, \\
        x_3=(-1,0,1,\ldots,1), & \quad y_3=1, \\
        x_4=(0,-1,1,\ldots,1), & \quad y_4=-1.
    \end{align*}
    The $1/\sqrt{d-1}$ factor is omitted also because we only discuss the $0/1$
    loss.

    For any $s$, let $A_s$ denote the event that
    \begin{align*}
        \1\sbr{\ip{w_s}{x_1}>0}=\1\sbr{\ip{w_s}{x_2}>0}=\1\sbr{\ip{w_s}{x_3}>0}=\1\sbr{\ip{w_s}{x_4}>0}.
    \end{align*}
    We will show that if $m\le\sqrt{d-2}/4$, then $A_s$ is true for all
    $1\le s\le m$ with probability $1/2$, and \Cref{fact:ntk_lb} follows from
    the fact that the XOR data is not linearly separable.

    For any $s$ and $i$,
    \begin{align*}
        \ip{w_s}{x_i}=(w_s)_1(x_i)_1+(w_s)_2(x_i)_2+\sum_{j=3}^{d}(w_s)_j.
    \end{align*}
    Since $\del{(x_i)_1,(x_i)_2}$ is $(1,0)$ or $(0,1)$ or $(-1,0)$ or $(0,-1)$,
    event $A_s$ will happen as long as
    \begin{align*}
        \envert{(w_s)_1}<\envert{\sum_{j=3}^{d}(w_s)_j},\quad\textrm{and}\quad\envert{(w_s)_2}<\envert{\sum_{s=3}^{d}(w_s)_j}.
    \end{align*}
    Note that $(w_s)_1,(w_s)_2\sim\cN(0,1)$ while
    $\sum_{j=3}^{d}(w_s)_j\sim\cN(0,d-2)$.
    As a result, due to \Cref{fact:normal_prop},
    \begin{align*}
        \bbP\del{\envert{(w_s)_1}<\envert{\sum_{j=3}^{d}(w_s)_j}}=\bbP\del{\envert{(w_s)_2}<\envert{\sum_{s=3}^{d}(w_s)_j}}>1-\frac{1}{\sqrt{d-2}}.
    \end{align*}
    Using a union bound, $\bbP(A_s)>1-\nicefrac{2}{\sqrt{d-2}}$.
    If $m\le\sqrt{d-2}/4$, then by a union bound again,
    \begin{align*}
        \bbP\del{\bigcup_{1\le s\le m}A_s}>1-\frac{2}{\sqrt{d-2}}m\ge1-\frac{2}{\sqrt{d-2}}\frac{\sqrt{d-2}}{4}=\frac{1}{2}.
    \end{align*}
\end{proof}

\end{document}